\documentclass[11pt]{article}
\usepackage[english]{babel}
\usepackage{graphicx}
\usepackage{xcolor}
\usepackage{framed}
\usepackage[normalem]{ulem}
\usepackage{amsmath}
\usepackage{amsthm,thmtools,mathtools}
\usepackage{amssymb}
\usepackage{amsfonts}
\usepackage{enumerate}
\usepackage{algorithmic} 
\usepackage[ruled, vlined,noend]{algorithm2e}
\usepackage{wrapfig} 
\usepackage{caption}
\usepackage{subcaption}
\usepackage{fullpage}
\usepackage[margin=1in]{geometry}
\usepackage[
  pagebackref,
  colorlinks=true,
  urlcolor=blue,
  linkcolor=black,
  citecolor=blue,
]{hyperref}
\usepackage[nameinlink]{cleveref}
\theoremstyle{definition}
\newtheorem{theorem}{Theorem}
\newtheorem{observation}{Observation}
\newtheorem{corollary}{Corollary}
\newtheorem{lemma}{Lemma}
\newtheorem{proposition}{Proposition}
\newtheorem{definition}{Definition}
\newtheorem{example}{Example}

\newtheorem{extension}{Extension}

\DeclareMathOperator*{\argmax}{arg\,max} 

\renewcommand{\vec}[1]{\mathbf{#1}}

\SetCommentSty{mycommfont}

\begin{document}

\title{The Strategic Perceptron}
\author{Saba Ahmadi\thanks{University of Maryland. Email: \texttt{saba@umd.edu}. Research initiated during author's visit to Northwestern University. Author was supported in part by a research award from Amazon, and NSF CCF-1733556. Part of the research was done when author was visiting Toyota Technological Institute at Chicago.} \and%
Hedyeh Beyhaghi\thanks{Toyota Technological Institute at Chicago. Email: \texttt{hedyeh@ttic.edu}.This work was done in part while the author was a Postdoctoral Researcher at Northwestern University, supported in part by the National Science Foundation under grant CCF-1618502.} \and%
Avrim Blum\thanks{Toyota Technological Institute at Chicago. Email: \texttt{avrim@ttic.edu}. This work was supported in part by the National Science Foundation under grants CCF-1815011 and CCF-1733556.} \and%
Keziah Naggita\thanks{Toyota Technological Institute at Chicago. Email: \texttt{knaggita@ttic.edu}. This work was supported in part by the National Science Foundation under grant CCF-1815011.}}
    \date{}
\maketitle

\begin{abstract}
The classical \emph{Perceptron algorithm} provides a simple and elegant procedure for learning a linear classifier. In each step, the algorithm observes the sample’s \emph{position} and \emph{label} and updates the current predictor accordingly if it makes a mistake. However, in presence of strategic agents that desire to be classified as positive and that are able to modify their position by a limited amount, the classifier may not be able to observe the true position of agents but rather a position where the agent pretends to be. Unlike the original setting with perfect knowledge of positions, in this situation the Perceptron algorithm fails to achieve its guarantees, and we illustrate examples with the predictor oscillating between two solutions forever, making an unbounded number of mistakes even though a perfect large-margin linear classifier exists. Our main contribution is providing a modified Perceptron-style algorithm which makes a bounded number of mistakes in presence of strategic agents with both $\ell_2$ and weighted $\ell_1$ manipulation costs.  In our baseline model, knowledge of the manipulation costs (i.e., the extent to which an agent may manipulate) is assumed.  In our most general model, we relax this assumption and provide an algorithm which learns and refines both the classifier and its cost estimates to achieve good mistake bounds even when manipulation costs are unknown.
\end{abstract}

\newpage
\section{Introduction}

In machine learning,  \emph{strategic classification} deals with the problem of learning a classifier when the learner relies on data that is provided by strategic agents~\cite{10.1145/2020408.2020495, Hardt2016}. For example, consider deciding eligibility of individuals for employment or education. In order to be considered eligible, individuals may engage in activities that do not truly change their qualifications, but affect the decision made. In the aforementioned settings, these activities include job or college applicants carefully crafting their application materials and investing in interview or test preparations. In these scenarios, by using information about the classifier, individuals alter their features artificially by a limited amount to achieve their desirable outcome.

Strategic classification is particularly challenging in the online setting, where data points arrive in an arbitrary sequence, because the way that points manipulate may depend (in a discontinuous way) on the current classifier, and there is no useful source of unmanipulated data.
More specifically, consider a standard online learning setting as follows.  Individuals arrive one at a time, and based on the individual's features, the classifier predicts the individual as positive or negative. The learner is then told the correct classification and may update its classifier for the next round. The learner's goal is to minimize the number of mistakes made. Performing the same procedure in the strategic setting brings in several challenges. First, since the learner does not observe the \emph{true} features, the update is done based on the individual's \emph{manipulated} features. Therefore, at each point in time, the current classifier is built from manipulated data the learner has observed in the past. Second, each individual reacts to the \emph{current} classifier. This means that the individuals' behaviors change over time and may be different from behavior of previous individuals with similar features.  Moreover, because data arrives in an arbitrary order, there is no way to collect a ``representative sample'' of unmanipulated data by, say, classifying all examples as negative for an initial period. Finally, manipulation behavior may be a discontinuous function of the classifier's parameters: if an individual's cost to manipulate is slightly less than the benefit of being classified as positive then it will do so, but if it is slightly greater then it will not.  Due of these issues, as we will show, standard learning algorithms that would make a limited number of mistakes in non-strategic settings may end up cycling and making unbounded number of mistakes; even if there exists a perfect classifier they may not find one.

Another challenge in online strategic classification is when the learner is unaware of the manipulation costs, which determine the extent to which agents will manipulate their features to achieve a positive classification. In this case, on top of estimating the individuals' real attributes based on the observed data, the learner also needs to estimate the costs. Unreasonable estimate of costs may lead to poor performance by the learner as the learner may not be able to distinguish if a classification mistake is due to an improper classifier or improper estimate of costs. This failure to distinguish correctly may lead to deterioration of the classifier and divergence from the optimal solution.

We study an online linear classification problem when the individuals are strategic.
To isolate the effect of manipulation, we focus on finding a linear classifier when the unmanipulated data is linearly separable; i.e., the feature space is divided into two half spaces: with \emph{positive} data points in one and \emph{negative} data points in the other, and a nonzero margin between them. 
When individuals can manipulate, in each step, the arriving individual wishes to be classified positively. If the individual's feature vector $\vec{z}$ is not classified as positive with the true attributes, they may choose to suffer a cost and pretend to have a feature vector $\vec{x}$. More specifically, we consider utility-maximizing individuals, where utility is defined as value minus cost, who receive value 1 for being classified as positive and 0 for being classified as negative.
 We then consider two classes of cost functions: $\ell_2$ costs (where cost is proportional to the Euclidean distance moved) and weighted $\ell_1$ costs (where the cost of reaching a destination is the sum of separate costs paid in each coordinate direction).  The $\ell_2$ case represents settings where individuals when manipulating can take actions that affect multiple attributes. The $\ell_1$ case represents settings where there is a specific action associated with each attribute. Note that in both cases, even though the \emph{unmanipulated} data is linearly separable, the observed \emph{manipulated} data points may no longer be separable.

{\paragraph{Our Techniques and Results}
The main contribution of this paper is solving the problem of online learning of linear separators in the strategic setting, making a bounded number of mistakes when the unmanipulated data is linearly separable by a nonzero margin.  To do this, we build on and adapt the classic 
Perceptron algorithm~\cite{rosenblatt1958perceptron}, redesigning it to work in various strategic settings. This classic algorithm makes a bounded number of mistakes in the nonstrategic case when positive and negative data points are linearly separable.  However, as mentioned earlier, in the strategic case it may cycle indefinitely (much like gradient descent for finding a Nash equilibrium) and make an unbounded number of mistakes; see \Cref{ex:perceptron_L2_fails,ex:perceptron_L2_fails_bias_term}. Our main technique is to carefully design \emph{surrogate} data points and feed them as the observed data to the algorithm. The role of the surrogate is to ensure that the algorithm is able to make positive progress each time it makes a mistake; however, defining it requires extra care.  In particular, while it is not hard to show we can compute the {\em direction} that data points may have manipulated in, we can never be sure exactly how far (and we are particularly interested in the case that the amount by which data points can manipulate is large compared to the margin of separation).  
Another adaptation is to use a positive threshold for the dot product with the classifier’s weight vector for a point to be classified positive.}

{Making use of the Perceptron algorithm, surrogate data points, and a positive linear threshold is central in all the algorithms designed in this paper. However, additional ideas are needed to handle subtleties of each specific setting. 
For example, for weighted $\ell_1$ costs we need to take extra steps to make the manipulation direction unique and in line with the true classifier’s weight vector, and in the unknown costs setting, we need to distinguish if the cost estimates are above or below the true costs. 
Another case is when the separating hyperplane does not cross the origin. In this case, the classic approach is to just add a fake coordinate in which each example has value 1, and then apply the Perceptron algorithm to those extended data points.  However, when data is given by strategic agents, this reduction breaks down and we need to apply different ideas. There are also some results that hold for the non-strategic case that we do not know how to achieve, such as obtaining a mistake bound proportional to the hinge-loss of the best separator when data is not perfectly separable; for this setting we show examples where our algorithm fails and propose it as an open problem.}

The main contributions of this paper are:
\begin{itemize}
\item[-] We give an online learning algorithm robust to manipulation that finds a linear classifier in a bounded number of mistakes with the knowledge of costs. The number of mistakes is not much larger than the standard Perceptron bound in the non-strategic case {for $\ell_2$ costs and is reasonably bounded in other settings as well}, see \Cref{thm:num_updates,thm:num-updates-mutiple-direction-manipulation,thm:bias}.
\item[-] {We give an online learning algorithm that generalizes the previous algorithm to unknown costs with a bounded number of mistakes. See \Cref{thm:l2-unknown-cost}.}
\item[-] {We generalize the algorithm for known $\ell_2$ costs to the case of heterogeneous agents whose utility functions differ by a limited amount and give an online learning algorithm with bounded number of mistakes. See \Cref{cor:different_cost}}.
\end{itemize}

\paragraph{Related Work}

The first studies on strategic classification focused on the offline setting; i.e., where the agents' true features come from a distribution.  Br\"{u}ckner and Scheffer~\cite{10.1145/2020408.2020495} and later Hardt et al.~\cite{Hardt2016}, formalized the strategic classification problem as a Stackelberg competition between a learner and an agent. They assume the learner has access to the distribution of agents true features and their cost functions; and use this information to design near-optimal classifiers.

Dong et al.~\cite{10.1145/3219166.3219193} initiated the study of strategic classification in the online setting where the learner does not know the distribution of agents' true features or their cost functions. A key difference between \cite{10.1145/3219166.3219193} and this paper is the assumption on the objective of the agents: we consider agents that wish to be classified as positive, whereas \cite{10.1145/3219166.3219193} considers agents that wish to increase their dot-product with the hypothesis vector no matter how they are classified. {Based on this assumption, in \cite{10.1145/3219166.3219193} the agents’ behaviors are continuous in the hypothesis vector. However, in our model, a small change in the hypothesis vector can cause a drastic (discontinuous) change in agents' behavior. More particularly, as a consequence of agents’ objective and utility structure, each agent can manipulate by a limited amount. If the classification hyperplane is closer than this amount, the agent would manipulate to be classified as positive; however, if it is slightly farther, the agent stays stationary. This discontinuity in the agent’s behavior is common in mechanism design and occurs in other problems such as pricing and auction design.}

{Chen et al.~\cite{chen2020learning} also study an online learning problem where agents can manipulate by a bounded distance.
However, while there are similarities between the setting studied in \cite{chen2020learning} and our $\ell_2$ cost model, explained in more detail in \Cref{sec:model}, \cite{chen2020learning} does not consider a fixed utility model and instead considers a regret term that is worst-case over agents that can manipulate by some bounded distance. As a result, their regret term may be arbitrarily high when the observed positions of positive and negative data points are inseparable, even if the unmanipulated points are linearly separable. Our algorithms, in contrast, can handle this inseparability during the learning procedure and make a bounded number of mistakes.}

{The goal of the papers mentioned so far is accuracy, or minimizing loss. There are also papers that consider other objectives.} Hu et al.~\cite{Hu:2019:} focus on a fairness objective and raise the issue that different populations of agents may have different manipulation costs. Braverman and Garg~\cite{braverman2020role}, by introducing noise in their classification, design algorithms where agents with different costs are better off not manipulating which tackles the fairness issue. Milli et al.~\cite{Milli2018TheSC} state that the accuracy that strategic classification seeks leads to a raised bar for agents who naturally are qualified and puts a burden on them to prove themselves.
Kleinberg and Raghavan~\cite{Kleinberg2018HowDC}, Haghtalab et al.~\cite{haghtalab2020maximizing}, Alon et al.~\cite{alon2020multiagent}, Bechavod et al.~\cite{bechavod2020causal}, Shavit et al.~\cite{shavit2020learning}, and Miller et al.~\cite{miller2019strategic}
focus on models in which the policy maker is interested in choosing a rule which incentivizes agent(s) to invest their effort into features that truly improve their qualification.
\paragraph{Organization of the Paper.} \Cref{sec:model} introduces the model and provides examples where the original Perceptron algorithm makes an unbounded number of mistakes. \Cref{sec:strategic-Perceptron,sec:generalization} study the case where the cost of manipulation is known:  \Cref{sec:strategic-Perceptron} focuses on $\ell_2$ costs and \Cref{sec:generalization} on weighted $\ell_1$ costs. \Cref{sec:strategic-Perceptron-unknown-alpha} studies the unknown costs model. \Cref{sec:different_costs} studies the generalization of known manipulation costs to heterogeneous agents that have slightly different costs. \Cref{sec:bias} extends the results of  \Cref{sec:strategic-Perceptron,sec:generalization} where the separator of the unmanipulated data points crosses the origin to the general case. Finally, conclusions and some open problems are presented in \Cref{sec:conclusion}.
\section{Model and Preliminaries}\label{sec:model}
In \Cref{sec:model_model}, we formally define our model. In  \Cref{sec:model_non_strategic}, we overview the non-strategic setting. In \Cref{sec:model_perceptron_fail}, we provide examples where the original Perceptron algorithm makes an unbounded number of mistakes.

\subsection{Model}\label{sec:model_model}
We study an online classification problem in which a series of examples in $\mathbb{R}^d$ arrive one at a time.  We think of examples as corresponding to $d$ observable features of individuals who wish to be classified as positive.  
They have the ability to manipulate their observable features at some cost.
Let $\vec{z}_t$ denote the $t^{th}$ example before manipulation, and $\vec{x}_t$ denote the observed $t^{th}$ example.
We assume there exists a vector $\vec{w}^*$, such that for each unmanipulated positive example $\vec{z}_t$ we have $\vec{z}_t^T\vec{w^*}\geq 1$, and for each unmanipulated negative example $\vec{z}_t$ we have $\vec{z}_t^T\vec{w^*}\leq -1$; i.e., a linear separator of margin $\gamma = 1/|\vec{w^*}|$. 
We use $|\vec{w}|$ to indicate the $\ell_2$ norm of $\vec{w}$.

We assume individuals are utility maximizers, where utility is defined as value minus cost. Individuals have value $1$ for being classified as positive, and $0$ for being classified as negative.
More formally, an agent with true coordinates $\vec{z}_t$ will move to $\vec{x}_t = \argmax_x [value(\vec{x}) - cost(\vec{z}_t,\vec{x})]$ where $value(\vec{x})=1$ if $\vec{x}$ is classified positive by the current classifier and $value(\vec{x})=0$ if $\vec{x}$ is classified negative, and $cost(\vec{z}_t,\vec{x})$ refers to the cost of manipulation from $\vec{z}_t$ to $\vec{x}$. This implies if the agent can manipulate their features at cost at most 1 to change their classification from negative to positive, then they will do so in the cheapest way possible, otherwise they will not.

We consider two settings for cost of manipulation. In the first setting, $cost(\vec{z}_t,\vec{x})$ is proportional to the $\ell_2$ distance of the two points $\vec{z}_t$ and $\vec{x}$; i.e., $cost(\vec{z}_t,\vec{x})=c\sqrt{\sum_{i=1}^d (\vec{x}_{i}-\vec{z}_{t,i})^2}$, where $c$ is the cost per unit of movement. 
 We define $\alpha=1/c$ as the maximum amount data points would be willing to move to achieve a positive classification.\footnote{For convenience we assume that if an agent is indifferent, i.e., its distance to the decision boundary is exactly $\alpha$, then it will maniplulate.  Note that Chen et al.~\cite{chen2020learning} also consider a model where individuals can move in a ball of fixed radius from their real position. However, they do not focus on a specific utility model.}
We assume $0\leq \alpha \leq R$ where $R=\max_t |\vec{z}_t|$. In the second setting, $cost(\vec{z}_t,\vec{x})$ is a weighted $\ell_1$ metric, such that $cost(\vec{z}_t,\vec{x})=\sum_{i=1}^d c_i|\vec{x}_{i}-\vec{z}_{t,i}|$. Similarly we define $\alpha_i=1/c_i$ as the maximum amount data points would move along the $i^{th}$ coordinate vector $\vec{e}_i$, where $0\leq \alpha_i \leq R$. 
We consider both scenarios of known and unknown costs. In the unknown $\ell_2$ costs we don't assume knowledge of $c$, and in unknown weighted $\ell_1$ costs we do not assume knowledge of $c_1, \ldots, c_d$.

\subsubsection*{Generalizations} For the majority of the paper we study the above model. In \Cref{sec:different_costs,sec:bias}, we then present and analyze several generalizations. \Cref{sec:different_costs} studies heterogeneous agents with $\ell_2$ costs where the costs per unit of movement (defined previously as $c$) for agents are slightly different. More particularly, we study the case where the maximum amount the agent arriving at time $t$ can move, $\alpha_t$ (i.e. $1/c_t$, where $c_t$ is the cost per unit of movement for agent $t$), is in the interval $[\alpha_{\min}, \alpha_{\max}]$ where $0\leq \alpha_{\max}-\alpha_{\min}\leq \gamma/2$. 
The algorithm does not have access to $\alpha_t$ but knows the interval. \Cref{sec:bias} studies the case where the separating hyperplane of the unmanipulated data does not cross the origin. More particularly, there exists a separator $\vec{z}^T\vec{w^*}+b = 0$, such that for a positive example $\vec{z}_t$, $\vec{z}_t^T\vec{w^*}+b\geq 1$ and for a negative example $\vec{z}_t$, $\vec{w^*}^T\vec{z}_t+b \leq -1$.

\subsection{Non-Strategic Setting and the Perceptron Algorithm}\label{sec:model_non_strategic}
As a reminder for the reader we provide the classical Perceptron algorithm here. This algorithm classifies all points with $\vec{x}_t^T\vec{w} \geq 0$ as positive, and the rest as negative; updating $\vec{w}$ when it makes a mistake. The total number of mistakes made by the algorithm is upper bounded by $R^2 |\vec{w^*}|^2$.

\begin{algorithm}
$\vec{w} \leftarrow \vec{0}$\;
\For{$t=1, 2, \cdots$}{
    Given example $\vec{x_t}$,
        predict $sgn(\vec{x}_t^T\vec{w})$\;
        \If{the prediction was a mistake }{
            \lIf{$\vec{x_t}$ was $+$}{
                $\vec{w} \leftarrow \vec{w}+\vec{x}_t$%
                }
            \lIf{$\vec{x}_t$ was $-$}
            { $\vec{w} \leftarrow \vec{w}-\vec{x}_t$%
            }
        }
}
\caption{Perceptron Algorithm}
\label{alg:Perceptron}
\end{algorithm}

\begin{extension}[Perceptron with separator not crossing the origin]\label{ext:bias}
A classic extension of
\Cref{alg:Perceptron} to the case where examples are linearly separable, but not by a separator passing through the origin, is to create an extra ``fake" coordinate.  Specifically, assume there exists a separator $\vec{x}^T\vec{w^*}+b = 0$, such that for a positive example $\vec{x}_t$, $\vec{x}_t^T\vec{w^*}+b\geq 1$ and for a negative example $\vec{x}_t$, $\vec{w^*}^T\vec{x}_t+b \leq -1$.  Then \Cref{alg:Perceptron} is extended by adding an extra coordinate of value $1$ to each example $\vec{x}_t$, replacing $\vec{x}_t$ with $(\vec{x}_t,1)$. The bias term $b$ is absorbed into $\vec{w^*}$ by adding an additional coordinate to $\vec{w^*}$, i.e. replacing $\vec{w^*}$ with $(\vec{w^*},b)$. Now, for the positive examples, $\vec{x}_t^T\vec{w^*}\geq 1$, and for the negative examples $\vec{x}_t^T\vec{w^*}\leq -1$, and \Cref{alg:Perceptron} can be used as before. 
\end{extension}

\subsection{Failure of the Perceptron Algorithm in Strategic Settings}
\label{sec:model_perceptron_fail}

The Perceptron algorithm may make unbounded number of mistakes in the models considered in this paper even when a perfect classifier exists. The following example illustrates this in a setting with $\ell_2$ cost.

\begin{example}
\label{ex:perceptron_L2_fails}
 Consider three examples $A=(-1,0)$, $B=(0,-1)$, and $C=(-0.5, -1)$ where $A$ is negative, $B$ is positive, and $C$ is negative. Suppose that $\alpha=0.5$. The following scenario of arrival of these examples makes the standard Perceptron algorithm (\Cref{alg:Perceptron}) cycle between two classifiers and make an unbounded number of mistakes. Suppose $A$ is the first example to arrive, then individuals $B$ and $C$ arrive respectively and repeatedly. After arrival of $A$, $\vec{w}=(1,0)$. $B$ does not need to manipulate as it is classified positive with the current classifier. However $C$ manipulates to point $(0,-1)$ and the algorithm mistakenly classifies it as positive. As a consequence,  $\vec{w}$ will be updated to $(1,0)-(0,-1)=(1,1)$. With the new classifier, $B$ cannot manipulate to be classified positive because it has distance $\sqrt{2}/2$ from the decision boundary. So, $B$ is misclassified as negative, causing an update to $\vec{w}=(1,1)+(0,-1)=(1,0)$ and the scenario repeats. \end{example}

\begin{figure}[ht!]
    \centering
    \begin{subfigure}[b]{0.46\textwidth}
        \centering
        \includegraphics[width=\textwidth]{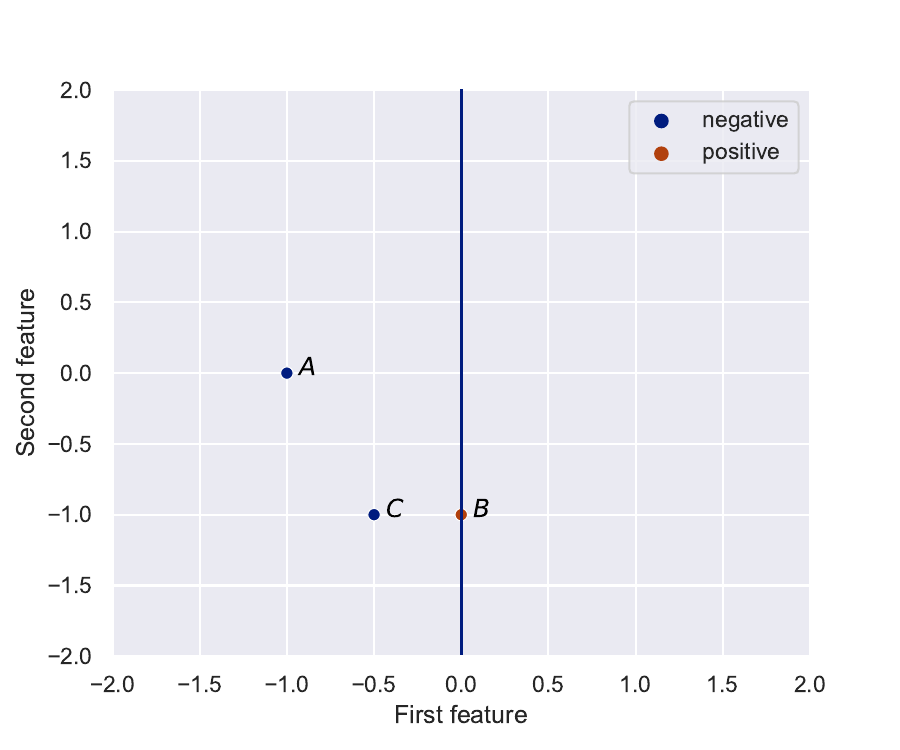}
        \caption{Classic Perceptron  correctly classifies  $B$ after update of $\vec{w}$ to $(1,0)$. However, $C$ now manipulates to the same location as $B$, which will cause a mistake and an update.}
        \label{fig:cp_step11}
    \end{subfigure}
    \hfill
    \begin{subfigure}[b]{0.46\textwidth}
        \centering
        \includegraphics[width=\textwidth]{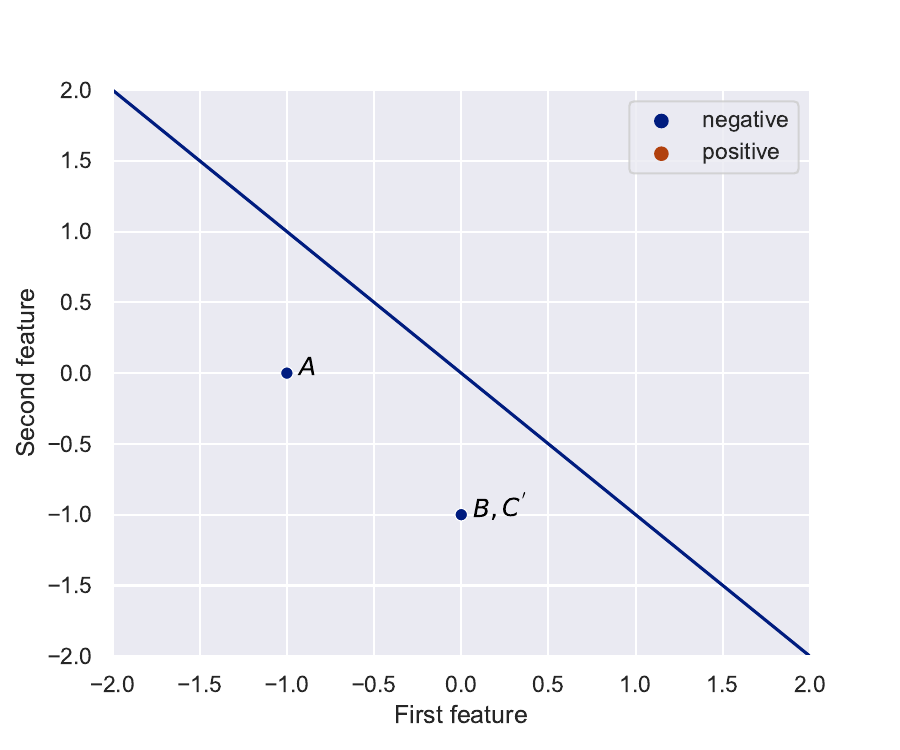}
        \caption{After $C$ manipulates to $C'=(0, -1)$, it is misclassified as positive and $\vec{w}$ is updated to $(1,1)$. $B$ cannot manipulate with current classifier because it is at distance $\sqrt{2}/2$ from the boundary.}
        \label{fig:cp_step21}
    \end{subfigure}
    \caption{{ \Cref{ex:perceptron_L2_fails} shows the classic Perceptron algorithm can make an unbounded number of mistakes} {in the strategic setting.}}
\end{figure}

Note that \Cref{ex:perceptron_L2_fails} shows that the standard Perceptron algorithm can fail even if there exists a classifier that is perfect in the presence of manipulation. In this example, the
classifier given by $\vec{w}=(1,0.5)$  works perfectly for the three points as $B$ can manipulate to be classified positive but $A$ and $C$ cannot. The main reason the algorithm fails despite existence of a perfect classifier, is that the behavior of individuals \emph{depends on the classifier} we are currently using and this can cause the algorithm to cycle indefinitely.

The failure of the Perceptron algorithm is not restricted to the $\ell_2$ costs model. \Cref{ex:perceptron_L2_fails} with $\alpha = (0.6, 0)$ makes an unbounded number of mistakes in the $\ell_1$ costs model as well.

The Perceptron algorithm as described above uses a threshold of 0.  One may wonder if the usual extension to non-zero thresholds 
(\Cref{ext:bias})
might solve the strategic learning problem.  In particular, any linearly separable dataset is still linearly separable in the presence of manipulation, by simply shifting the target separator by $\alpha$. However, the example below shows that this extension also fails when the data points are strategic.

\begin{example}
\label{ex:perceptron_L2_fails_bias_term}
{
Consider three examples $A=(-1,0)$, $B=(1,0)$, and $C=(0.5,0)$ where $A$ and $C$ are negative and $B$ is positive. Let $\alpha=0.5$. Suppose $A$ is the first example to arrive, then individuals $B$ and $C$ arrive respectively and repeatedly. After arrival of $A$, the separator is $1x_1 + 0x_2 - 1 \geq 0$. $B$ does not need to manipulate as it is classified positive with the current classifier as shown in \Cref{fig:cp_step1}. However $C=(0.5,0)$ manipulates to $(1, 0)$ and the algorithm mistakenly classifies it as positive as shown in \Cref{fig:cp_step2}. As a consequence, the separator is updated to $0x_1 + 0x_2 - 2 \geq 0$. With the new classifier, $B$ is misclassified as negative but does not manipulate.  The separator is then updated to $1x_1 + 0x_2 - 1 \geq 0$, and the process repeats indefinitely. Therefore the classifier keeps cycling and never correctly classifies the data even when a linear separator exists. 
\begin{figure}[ht!]
    \centering
    \begin{subfigure}[b]{0.46\textwidth}
        \centering
        \includegraphics[width=\textwidth]{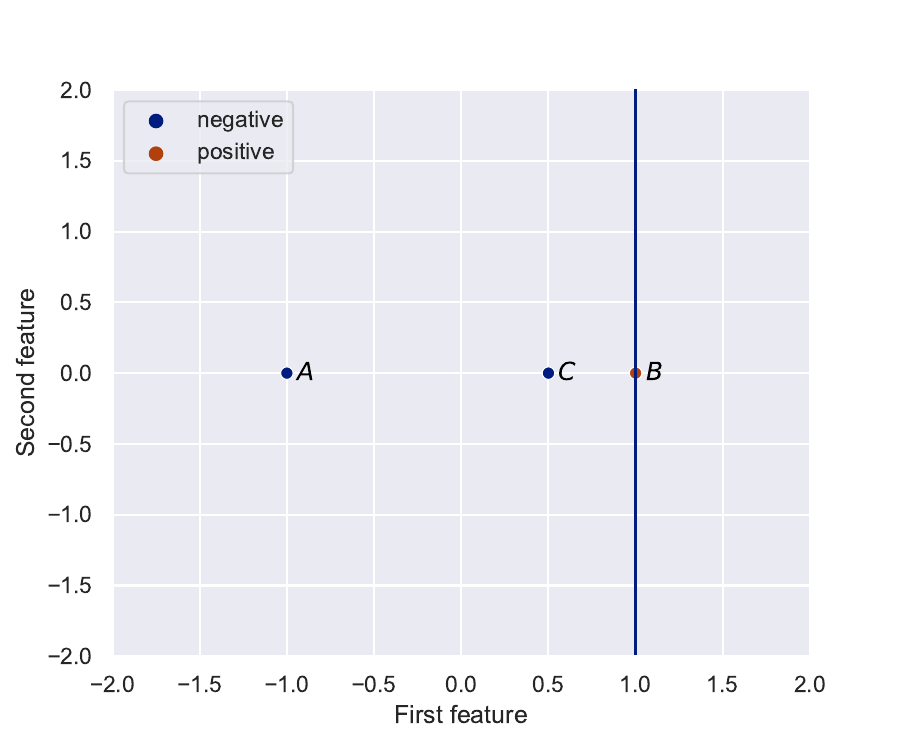}
        \caption{The Perceptron algorithm updates $\vec{w}$ after misclassifying $A$. When $B$ arrives it correctly classifies it.}
        \label{fig:cp_step1}
    \end{subfigure}
    \hfill
    \begin{subfigure}[b]{0.46\textwidth}
        \centering
        \includegraphics[width=\textwidth]{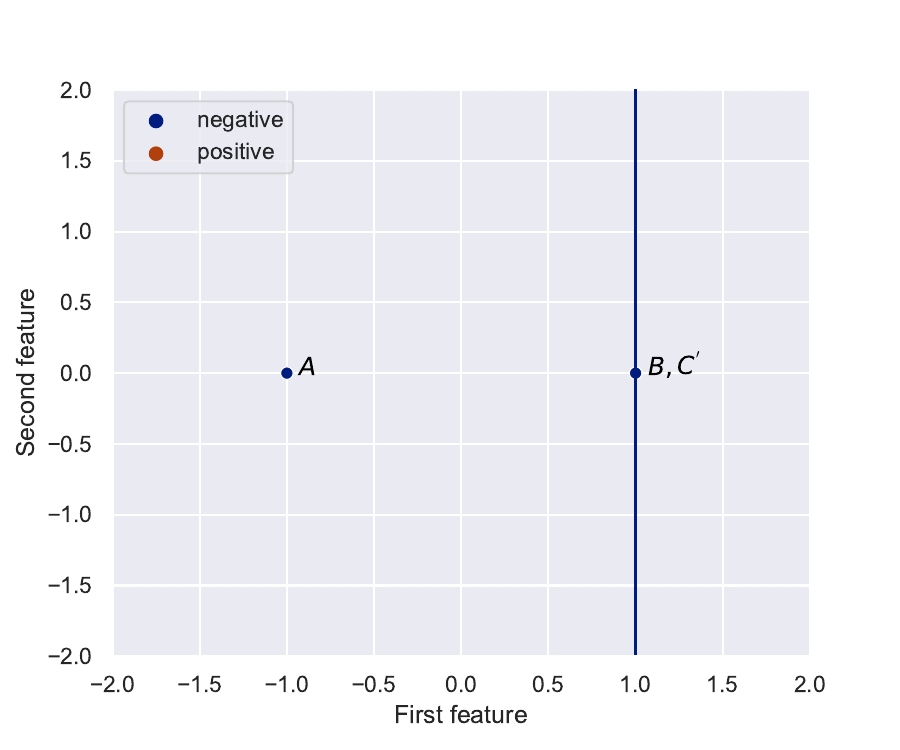}
        \caption{After $C$ manipulates to $C'=(1, 0)$, it is now misclassified as positive by the current classifier and $\vec{w}$  and $bias$ are updated accordingly.}
        \label{fig:cp_step2}
    \end{subfigure}
    \caption{{\Cref{ex:perceptron_L2_fails_bias_term} shows the non-zero threshold Perceptron algorithm can make an unbounded number of mistakes} {in the strategic setting.}}
\end{figure}
}
\end{example}

\section{Known $\ell_2$ Costs}
\label{sec:strategic-Perceptron}
In this section, we provide an algorithm for the $\ell_2$ costs setting. At a high level, there are two main ideas to modify and generalize the Perceptron algorithm for this setting. The first modification is raising the bar for a point to be classified as positive. Previously, a nonnegative dot product with the current classifier (a threshold of $0$), sufficed for positive classification. However, in the new algorithm, the threshold is a strictly positive value depending on the cost of manipulation. The second modification is using a \emph{surrogate} for the data points when the classifier updates. Interestingly, we only need to use a surrogate for negative points, and in this case the surrogate is a projection of the point in the opposite direction of manipulation, detected by the algorithm.

\begin{figure}
    \centering
        \centering
        \includegraphics[width=0.40\textwidth]{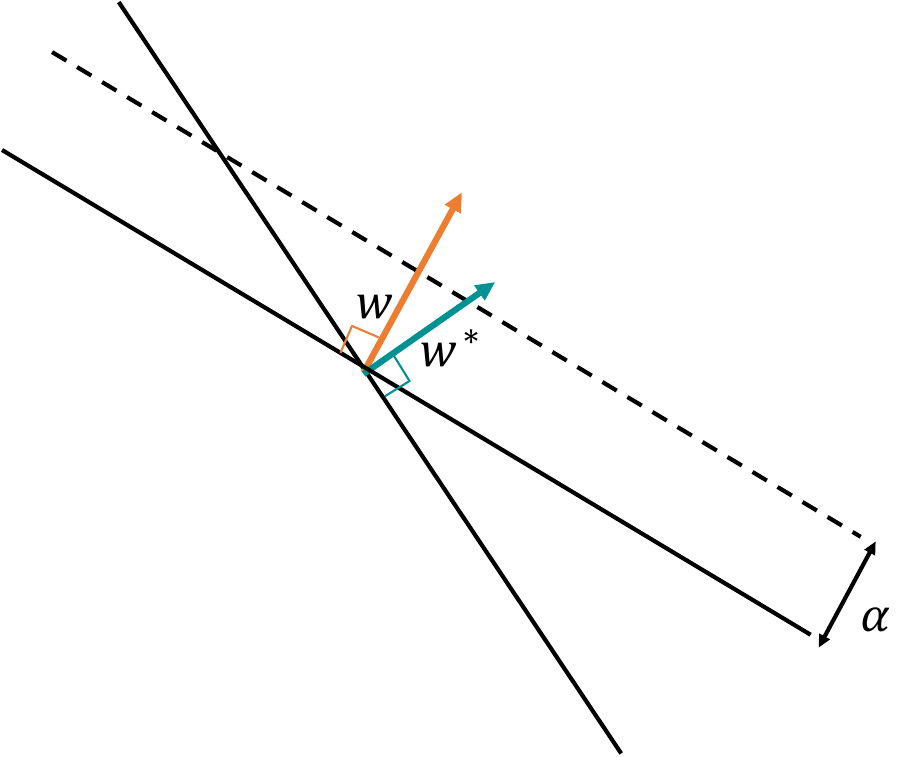}
        \caption{Strategic Perceptron with known manipulation cost. The dashed line represents the manipulation hyperplane discussed in \Cref{obs:manipulation_line}. The margin of width $\alpha$ is the forbidden region, discussed in \Cref{obs:firbidden_region}.}
        \label{fig:Perceptron}
    \hfill
\end{figure}

\paragraph{Overview of \Cref{alg:strategic-Perceptron}.} This algorithm is a generalization of the Perceptron algorithm which we call \emph{strategic Perceptron}. The algorithm starts by predicting all points as positive until it makes a mistake. Note that during this period, individuals do not have incentive to manipulate. From that point on, the algorithm classifies all points with  ${\vec{x}_t^T\vec{w}}/{|\vec{w}|}-\alpha \geq 0$ as positive, and the rest as negative. Whenever the algorithm makes a mistake, the predictor $\vec{w}$ is updated with a surrogate value, $\vec{\tilde{x}}_t$, defined below.

\begin{definition}[$\vec{\tilde{x}}_t$, surrogate data point in $\ell_2$ setting] \label{def:surrogate}
We define surrogate data point, $\vec{\tilde{x}}_t$, as follows.
\begin{align*}
\vec{\tilde{x}}_t=
\begin{cases}
\vec{x}_t-\alpha\frac{\vec{w}}{|\vec{w}|},&\quad\text{if $\vec{x}_t$ is $-$ and }\frac{\vec{x}_t^T\vec{w}}{|\vec{w}|}=\alpha;\\
\vec{x}_t,&\quad\text{if $\vec{x}_t$ is $+$ and }\frac{\vec{x}_t^T\vec{w}}{|\vec{w}|}=\alpha;\\
\vec{x}_t,&\quad\text{if  }\frac{\vec{x}_t^T\vec{w}}{|\vec{w}|} > \alpha \text{ or } \frac{\vec{x}_t^T\vec{w}}{|\vec{w}|} \leq 0.
\end{cases}
\end{align*}
\end{definition}

\begin{algorithm}[tb]
$\vec{w} \leftarrow \vec{0}$\;
\For{$t=1, 2, \cdots$}{
    Given example $\vec{x}_t$:\\
    \If{$|\vec{w}|$ is $0$}{
        predict $+$\;
        \lIf{the prediction was a mistake}{
             $\vec{w} \leftarrow \vec{w}-\vec{x}_t$%
        }
    }
    \Else{
        predict $sgn(\frac{\vec{x}_t^T\vec{w}}{|\vec{w}|}-\alpha)$\;
        \lIf{the prediction was a mistake and $\vec{x}_t$ was $+$}{
        $\vec{w} \leftarrow \vec{w}+\vec{\tilde{x}}_t$%
        }
        \lIf{the prediction was a mistake and $\vec{x}_t$ was $-$}{$\vec{w} \leftarrow \vec{w}-\vec{\tilde{x}}_t$%
        }
    }
}
\caption{Strategic Perceptron for $\ell_2$ costs}
\label{alg:strategic-Perceptron}
\end{algorithm}

\begin{observation}[manipulation hyperplane]\label{obs:manipulation_line}
In \Cref{alg:strategic-Perceptron}, $\vec{x}_t$ is a manipulated example  only if  ${\vec{x}^T\vec{w}}/{|\vec{w}|}=\alpha$. The reason is as follows.
In order to maximize utility, individuals move data points in direction of $\vec{w}$ and move the point the minimum amount to be classified as positive. Therefore, if with true features they are classified as negative, they only need to move to the line with dot product equal to $\alpha$ and moving to any other location contradicts with utility maximizing. {In other words:}
{
  \[\vec{x}_t=
 \begin{cases}
 \vec{z}_t + \Big(\alpha-\frac{\vec{z}_t^T\vec{w}}{|\vec{w}|}\Big)\frac{\vec{w}}{|\vec{w}|}, &\quad\text{if }0 \leq \frac{\vec{z}_t^T\vec{w}}{|\vec{w}|} \leq \alpha;\\
 \vec{z}_t, &\quad\text{otherwise.}
 \end{cases}
 \]
 }
 
\end{observation}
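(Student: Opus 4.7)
The plan is to argue via direct utility maximization that a manipulated $\vec{x}_t$ must land exactly on the boundary hyperplane where it is just barely classified positive. Recall the utility model: value $1$ for positive classification and $0$ for negative, with cost $|\vec{x}_t-\vec{z}_t|/\alpha$ for moving from $\vec{z}_t$ to $\vec{x}_t$. The algorithm classifies $\vec{x}_t$ as positive exactly when $\vec{x}_t^T\vec{w}/|\vec{w}|\geq\alpha$, so the positive region is a closed half-space whose boundary is the hyperplane in question.

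First I would split into two cases based on the true position $\vec{z}_t$. If $\vec{z}_t^T\vec{w}/|\vec{w}|\geq\alpha$, then the individual is already classified positively at zero cost, so any movement would strictly decrease their utility; hence they do not manipulate, and $\vec{x}_t=\vec{z}_t$. If instead $\vec{z}_t^T\vec{w}/|\vec{w}|<\alpha$, then to obtain positive classification the individual must reach some point in the half-space $\{\vec{x}:\vec{x}^T\vec{w}/|\vec{w}|\geq\alpha\}$; to maximize utility subject to doing so, they minimize the distance $|\vec{x}_t-\vec{z}_t|$, which is the standard $\ell_2$ projection onto this closed half-space.

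Since $\vec{z}_t$ lies strictly outside the half-space, the unique nearest point is the orthogonal projection onto its boundary, and in particular satisfies $\vec{x}_t^T\vec{w}/|\vec{w}|=\alpha$ with equality. If this projection has cost strictly less than $1$ the individual manipulates to this point; if the cost is at least $1$ they prefer to stay at $\vec{z}_t$ and no manipulation occurs. In every case, whenever $\vec{x}_t\neq\vec{z}_t$ the observed point lies on the hyperplane $\vec{x}^T\vec{w}/|\vec{w}|=\alpha$, which is exactly the claim. There is no serious obstacle here; the only subtle point worth spelling out is why the projection lies on the boundary rather than in the interior, and this is immediate because the half-space constraint is active at the optimum whenever $\vec{z}_t$ fails to satisfy it.
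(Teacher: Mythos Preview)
Your proposal is correct and follows essentially the same approach as the paper's own justification: both argue that a utility-maximizing agent who is not already classified positively will move the minimum $\ell_2$ distance into the positive half-space, which is the orthogonal projection onto the boundary $\vec{x}^T\vec{w}/|\vec{w}|=\alpha$. Your write-up is simply a more careful expansion of the paper's two-sentence sketch, including the explicit case split and the remark on why the constraint is active at the optimum.
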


\begin{observation}[forbidden region]\label{obs:firbidden_region}
No observed data point $\vec{x}_t$ will satisfy $0 < {\vec{x}_t^T\vec{w}}/{|\vec{w}|}<\alpha$, and therefore $\vec{\tilde{x}}_t$ does not need to be defined for $0 < {\vec{x}_t^T\vec{w}}/{|\vec{w}|}<\alpha$. The reason is that any such data point must either have manipulated to that position or not.
If it manipulated, the manipulation was not rational since it did not help the data point to get classified as positive. If it did not manipulate, this was not rational either  since the data point has a distance less than $\alpha$ from the classifier.
\end{observation}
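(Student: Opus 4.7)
My plan is to argue by contradiction. Suppose some observed example $\vec{x}_t$ satisfies $0 < \vec{x}_t^T\vec{w}/|\vec{w}| < \alpha$. Since Algorithm \ref{alg:strategic-Perceptron} uses threshold $\alpha$ for positive classification, such an $\vec{x}_t$ is predicted negative. I will split on whether the corresponding agent manipulated, and in each case derive a contradiction with utility maximization as assumed in Section \ref{sec:model}.

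First, suppose the agent did manipulate, so $\vec{x}_t \neq \vec{z}_t$ and the agent paid a strictly positive $\ell_2$ cost. The value for positive classification is $1$ and for negative is $0$, so any manipulation whose output is still classified negatively yields strictly negative utility, strictly worse than the utility $0$ of not manipulating at all. Hence such a manipulation would never occur, contradicting $\vec{x}_t \neq \vec{z}_t$.

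Second, suppose the agent did not manipulate, so $\vec{x}_t = \vec{z}_t$ and $0 < \vec{z}_t^T\vec{w}/|\vec{w}| < \alpha$. The cheapest deviation that crosses the threshold is to translate in the direction $\vec{w}/|\vec{w}|$ by the exact amount $\alpha - \vec{z}_t^T\vec{w}/|\vec{w}|$, which is strictly less than $\alpha$. Since $\alpha = 1/c$, the cost of this deviation is $c\bigl(\alpha - \vec{z}_t^T\vec{w}/|\vec{w}|\bigr) < 1$, so the utility of the manipulated outcome strictly exceeds $0$. A utility-maximizing agent would therefore have manipulated, contradicting $\vec{x}_t = \vec{z}_t$. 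Combining the two cases rules out the assumed range; since $\vec{\tilde{x}}_t$ in Definition \ref{def:surrogate} was only left undefined there, this also justifies the second half of the observation.

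The only substantive step is the geometric fact invoked in the second case: that projecting onto $\vec{w}/|\vec{w}|$ is the minimum-$\ell_2$ way to raise $\vec{x}^T\vec{w}/|\vec{w}|$ by a prescribed amount, which is simply the statement that Euclidean distance from a point to a hyperplane equals the magnitude of its signed projection onto the unit normal. Everything else is bookkeeping on the definitions of utility, cost, and $\alpha$, so I do not expect a serious obstacle.
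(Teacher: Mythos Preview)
Your proof is correct and follows essentially the same case split by contradiction that the paper gives inline in the observation itself; you have simply made the utility computations explicit (cost $c(\alpha-\vec{z}_t^T\vec{w}/|\vec{w}|)<1$ in the non-manipulated case, negative utility in the manipulated case) where the paper leaves them implicit. One tiny imprecision: in the manipulated case, the utility of not manipulating is at least $0$ rather than exactly $0$ (the agent might already be classified positive), but this only strengthens your inequality and does not affect the argument.
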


We show \Cref{alg:strategic-Perceptron} makes at most $(R+\alpha)^2|\vec{w^*}|^2$ mistakes.
First, we need to prove the following lemmas hold. 
\begin{lemma}
\label{lem:separability}
For any positive data point $\vec{{x}}_t$,  $\vec{\tilde{x}}_t^T\vec{w^*}\geq 1$, and for any negative data point $\vec{{x}}_t$, $\vec{\tilde{x}}_t^T\vec{w^*}\leq -1$. Also,  throughout the execution of \Cref{alg:strategic-Perceptron}, $\vec{w}^T\vec{w^*}\geq 0$.
\end{lemma}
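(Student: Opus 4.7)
The plan is a simultaneous induction on the updates performed by Algorithm~\ref{alg:strategic-Perceptron}: I treat the surrogate separability and the invariant $\vec{w}^T\vec{w^*}\geq 0$ as two intertwined claims, since the separability of $\vec{\tilde{x}}_t$ under the current $\vec{w}$ drives the invariant's inductive step, while separability itself needs $\vec{w}^T\vec{w^*}\geq 0$ under the current $\vec{w}$.

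First I would establish the separability inequalities assuming the invariant holds at the moment $\vec{\tilde{x}}_t$ is computed. Rational manipulation (Observation~\ref{obs:manipulation_line}) lets me parametrize any observed point as $\vec{x}_t=\vec{z}_t+\delta\,\vec{w}/|\vec{w}|$ for some $\delta\in[0,\alpha]$, with $\delta=0$ precisely when no manipulation occurred; Observation~\ref{obs:firbidden_region} confirms that the three branches of Definition~\ref{def:surrogate} exhaust all observable $\vec{x}_t$. In the positive-label branches the surrogate equals $\vec{x}_t$, giving
\[
\vec{\tilde{x}}_t^T\vec{w^*} \;=\; \vec{z}_t^T\vec{w^*} + \frac{\delta}{|\vec{w}|}\,\vec{w}^T\vec{w^*} \;\geq\; 1,
\]
using $\vec{z}_t^T\vec{w^*}\geq 1$, $\delta\geq 0$, and the invariant. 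In the negative-label case with $\vec{x}_t^T\vec{w}/|\vec{w}|=\alpha$, the surrogate subtracts exactly $\alpha\,\vec{w}/|\vec{w}|$, so
\[
\vec{\tilde{x}}_t^T\vec{w^*} \;=\; \vec{z}_t^T\vec{w^*} - \frac{\alpha-\delta}{|\vec{w}|}\,\vec{w}^T\vec{w^*} \;\leq\; -1,
\]
from $\vec{z}_t^T\vec{w^*}\leq -1$, $\alpha\geq\delta$, and the invariant again. The remaining negative sub-case ($\vec{x}_t^T\vec{w}/|\vec{w}|>\alpha$ or $\leq 0$) forces $\delta=0$ and $\vec{\tilde{x}}_t=\vec{z}_t$, so separability is immediate.

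Next I would prove $\vec{w}^T\vec{w^*}\geq 0$ by induction on updates. Initially $\vec{w}=\vec{0}$ and the invariant holds trivially. The first update happens in the ``$|\vec{w}|=0$'' branch and must be on a negative point (positives are predicted correctly there); since every point is already classified positive, no manipulation is rational, so $\vec{x}_t=\vec{z}_t$ and $\vec{w}\leftarrow-\vec{z}_t$ yields $\vec{w}^T\vec{w^*}\geq 1$. For any subsequent update, the sign of $\pm\vec{\tilde{x}}_t$ matches the true label of the mistake, so by the separability just established for the current $\vec{w}$,
\[
\vec{w}_{\text{new}}^T\vec{w^*} \;=\; \vec{w}^T\vec{w^*} \pm \vec{\tilde{x}}_t^T\vec{w^*} \;\geq\; 0+1,
\]
preserving the invariant.

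The main obstacle is spotting the mutual dependence: the negative-case surrogate differs from $\vec{z}_t$ by a $\vec{w}$-aligned shift whose projection onto $\vec{w^*}$ is controlled only through the invariant, and the invariant itself moves forward only through separability. Threading the two claims in lockstep, with a careful case analysis driven by Observations~\ref{obs:manipulation_line} and~\ref{obs:firbidden_region}, is the heart of the proof; once the joint invariant is fixed, each case reduces to a one-line calculation.
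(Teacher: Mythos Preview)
Your proposal is correct and follows essentially the same approach as the paper's own proof: a joint induction in which the invariant $\vec{w}^T\vec{w^*}\geq 0$ and the surrogate separability are established in lockstep, with Observations~\ref{obs:manipulation_line} and~\ref{obs:firbidden_region} driving the case analysis. Your unified parametrization $\vec{x}_t=\vec{z}_t+\delta\,\vec{w}/|\vec{w}|$ with $\delta\in[0,\alpha]$ is a slightly cleaner way to phrase the paper's case split, but the logic is identical.
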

\begin{proof}
The proof uses induction. First, we show after the first update of the algorithm $\vec{w}^T\vec{w^*} > 0$. Second, we show if at the end of step $t-1$, $\vec{w}^T\vec{w^*} \geq 0$, then at step $t$, $\vec{\tilde{x}}_t^T\vec{w^*}\geq 1$ for positive points, and $\vec{\tilde{x}}_t^T\vec{w^*}\leq -1$ for negative points. Finally, we show if $\vec{w}^T\vec{w^*} \geq 0$ at the end of step $t-1$, and $\vec{\tilde{x}}_t^T\vec{w^*}\geq 0$ for positive points, and $\vec{\tilde{x}}_t^T\vec{w^*}\leq 0$ for negative points, then $\vec{w}^T\vec{w^*} \geq 0$ at the end of step $t$. 

The first step is straight-forward. Initially, $\vec{w}=0$. While $\vec{w}=0$, we have $\vec{x}_t^T\vec{w}=0$, and arriving examples get classified positively. The first mistake occurs when a negative example $\vec{x}_t$ arrives, and gets classified as positive. In this case, $\vec{w}$ gets updated to $\vec{w}-\vec{x}_t$. Since $\vec{x}_t^T\vec{w^*}\leq -1$, we conclude $(\vec{w}-\vec{x}_t)\vec{w^*}>0$.

The second step is more involved. By definition of the surrogate values, for any points such that ${\vec{x}_t^T\vec{w}}/{|\vec{w}|} \neq \alpha$,  we have $\vec{\tilde{x}}_t = \vec{x}_t$. By \Cref{obs:manipulation_line}, these points are not manipulated, i.e., $\vec{{x}}_t = \vec{z}_t$. This implies $\vec{\tilde{x}}_t = \vec{z}_t$ and therefore the claim holds. Thus, we only need to argue for the points on the hyperplane ${\vec{x}_t^T\vec{w}}/{|\vec{w}|} = \alpha$. Consider such data points. For the positive data points, we have, $\vec{\tilde{x}}_t = \vec{x}_t = \vec{z}_t + \beta\cdot \vec{w}/|\vec{w}|$, where $0\leq \beta \leq \alpha$. Therefore, $\vec{\tilde{x}}_t^T\vec{w^*} = \vec{z}_t^T\vec{w^*}+\beta\cdot \vec{w}^T\vec{w^*}/|\vec{w}| \geq \vec{z}_t^T\vec{w^*} \geq 1$.
The first inequality holds since by assumption of this step, $\vec{w}^T\vec{w^*}\geq 0$. On the other hand, for the negative data points we have $\vec{\tilde{x}}_t = \vec{x}_t-\alpha \cdot \vec{w}/|\vec{w}|$, where $\vec{x}_t = \vec{z}_t + \beta \cdot \vec{w}/|\vec{w}|$ and $0\leq \beta \leq \alpha$. This implies $\vec{\tilde{x}}_t = \vec{z}_t + (\beta - \alpha) \cdot \vec{w}/|\vec{w}|$. By multiplying with $\vec{w^*}$, we get $\vec{\tilde{x}}_t^T\vec{w^*} = \vec{z}_t^T\vec{w^*}+(\beta-\alpha)\cdot \vec{w}^T\vec{w^*}/|\vec{w}| \leq \vec{z}_t^T\vec{w^*} \leq -1$.

The final step is again straight-forward. Whenever $\vec{w}$ is updated, for positive points, $\vec{w}$ gets updated to $\vec{w}+\vec{\tilde{x}}_t$, where both $\vec{w}$ and $\vec{\tilde{x}}_t$ have nonnegative dot product with $\vec{w^*}$. For negative points, $\vec{w}$ gets updated to $\vec{w}-\vec{\tilde{x}}_t$, where $\vec{w}$ has a nonnegative and $\vec{\tilde{x}}_t$ has a negative dot product with $\vec{w^*}$.
\end{proof}

\begin{lemma}\label{lem:dot_product_x_tilde_w}
When \Cref{alg:strategic-Perceptron} makes a mistake on a positive example $\vec{x}_t$, $\vec{\tilde{x}}_t^T\vec{w}\leq 0$; and when it makes a mistake on a negative example $\vec{x}_t$, $\vec{\tilde{x}}_t^T\vec{w}\geq 0$.
\end{lemma}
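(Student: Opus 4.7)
The plan is a case analysis on the label of $\vec{x}_t$ and, within each case, on which branch of Definition~\ref{def:surrogate} is active. The key inputs are (i) the algorithm's prediction rule (predict $+$ iff $\vec{x}_t^T\vec{w}/|\vec{w}|\geq\alpha$), (ii) Observation~\ref{obs:firbidden_region} which rules out $0<\vec{x}_t^T\vec{w}/|\vec{w}|<\alpha$, and (iii) the piecewise formula for $\vec{\tilde{x}}_t$. I tacitly assume $|\vec{w}|>0$, since when $\vec{w}=\vec{0}$ both inner products are zero and the claim is vacuous (and moreover, during that initial phase Algorithm~\ref{alg:strategic-Perceptron} updates with $\vec{x}_t$ itself rather than with $\vec{\tilde{x}}_t$).

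For a mistake on a positive example, I would first deduce from the prediction rule that $\vec{x}_t^T\vec{w}/|\vec{w}|<\alpha$. Observation~\ref{obs:firbidden_region} excludes the open interval $(0,\alpha)$, and the boundary value $\alpha$ is excluded because there the algorithm predicts $+$, contradicting the mistake. What remains is $\vec{x}_t^T\vec{w}/|\vec{w}|\leq 0$, which places us in the third branch of Definition~\ref{def:surrogate}, so $\vec{\tilde{x}}_t=\vec{x}_t$. Taking the inner product with $\vec{w}$ and using $|\vec{w}|>0$ then gives $\vec{\tilde{x}}_t^T\vec{w}=\vec{x}_t^T\vec{w}\leq 0$, as desired.

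For a mistake on a negative example the prediction rule gives $\vec{x}_t^T\vec{w}/|\vec{w}|\geq\alpha$, and I would split into the boundary case $=\alpha$ and the strict case $>\alpha$. In the boundary case the first branch of Definition~\ref{def:surrogate} applies, and a direct computation $\vec{\tilde{x}}_t^T\vec{w}=\vec{x}_t^T\vec{w}-\alpha|\vec{w}|$ collapses to exactly $0$. In the strict case the third branch applies, so $\vec{\tilde{x}}_t=\vec{x}_t$ and $\vec{\tilde{x}}_t^T\vec{w}>\alpha|\vec{w}|\geq 0$. Either way $\vec{\tilde{x}}_t^T\vec{w}\geq 0$.

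I do not anticipate a genuine obstacle; the only step that requires care is handling the boundary value $\vec{x}_t^T\vec{w}/|\vec{w}|=\alpha$ consistently between the prediction rule and the branches of Definition~\ref{def:surrogate}. Once that boundary is placed correctly (positive prediction on the hyperplane, surrogate chosen according to the label), both inequalities reduce to one-line computations using the forbidden-region observation and the explicit form of $\vec{\tilde{x}}_t$.
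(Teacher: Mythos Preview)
Your proposal is correct and follows essentially the same case analysis as the paper: use the prediction rule plus Observation~\ref{obs:firbidden_region} to force $\vec{x}_t^T\vec{w}\leq 0$ on a positive mistake (whence $\vec{\tilde{x}}_t=\vec{x}_t$), and for a negative mistake split on $\vec{x}_t^T\vec{w}/|\vec{w}|=\alpha$ versus $>\alpha$, applying the corresponding branch of Definition~\ref{def:surrogate}. Your treatment of the boundary value and the $|\vec{w}|=0$ case is slightly more explicit than the paper's, but the argument is the same.
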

\begin{proof}
The algorithm makes a mistake on a positive example only if ${\vec{x}^T\vec{w}}/{|\vec{w}|} < \alpha$. By \Cref{obs:firbidden_region}, for no points, $0 < {\vec{x}^T\vec{w}}/{|\vec{w}|} < \alpha$. Therefore, for any positive example that the algorithm makes a mistake on, $\vec{x}^T\vec{w}\leq 0$. By \Cref{def:surrogate}, $\vec{\tilde{x}}_t = \vec{x}_t$ for all positive examples. Therefore, ${\vec{x}^T\vec{w}}\leq 0$ implies $\vec{\tilde{x}}^T\vec{w}\leq 0$. For negative examples, the algorithm makes a mistake only if ${\vec{x}^T\vec{w}}/{|\vec{w}|} \geq \alpha$. If the inequality is strict, i.e., ${\vec{x}^T\vec{w}}/{|\vec{w}|} > \alpha$, by \Cref{def:surrogate}, $\vec{\tilde{x}}_t=\vec{x}_t$, and therefore $\vec{\tilde{x}}_t^T\vec{w} \geq 0$. If ${\vec{x}^T\vec{w}}/{|\vec{w}|} = \alpha$, again using  \Cref{def:surrogate}, we have $\vec{\tilde{x}}^T\vec{w} = 0$.
\end{proof}

Next, we show the following theorem holds which gives a bound on the number of mistakes. Proof of the following theorem is along the lines of the proof of the classic Perceptron algorithm.
\begin{theorem}
\label{thm:num_updates}
\Cref{alg:strategic-Perceptron} makes at most $(R+\alpha)^2|\vec{w^*}|^2$ mistakes in the strategic setting with known $\ell_2$ costs, when the unmanipulated data points $\vec{z}_t$ satisfy $\vec{z}_t^T\vec{w^*}\geq 1$ for positive examples and $\vec{z}_t^T\vec{w^*}\leq -1$ for negative examples, and $R = \max_t |\vec{z}_t|$.
\end{theorem}

\begin{proof}
We keep track of two quantities, $\vec{w}^T\vec{w^*}$ and $|\vec{w}|^2$. First, we show that each time we make a mistake, $\vec{w}^T\vec{w^*}$ increases by at least $1$. If we make a mistake on a positive example then,
\[(\vec{w}+\vec{\tilde{x}}_t)^T\vec{w}^* = \vec{w}^T\vec{w^*}+\vec{\tilde{x}}_t^T\vec{w}^*\geq \vec{w}^T\vec{w^*} + 1;\]
where the last inequality holds by \Cref{lem:separability}. Similarly, if we make a mistake on a negative example,
\[(\vec{w}-\vec{\tilde{x}}_t)^T\vec{w}^* = \vec{w}^T\vec{w^*}-\vec{\tilde{x}}_t^T\vec{w}^*\geq \vec{w}^T\vec{w^*} + 1.\]
Next, on each mistake we claim that $|\vec{w}|^2$ increases by at most $(R+\alpha)^2$. If we make a mistake on a positive example $\vec{x}_t$, then we have:
\[(\vec{w}+\vec{\tilde{x}}_t)^T(\vec{w}+\vec{\tilde{x}}_t) = |\vec{w}|^2+2\vec{\tilde{x}}_t^T\vec{w}+|\vec{\tilde{x}}_t|^2\leq |\vec{w}|^2+|\vec{\tilde{x}}_t|^2\leq |\vec{w}|^2+ (R+\alpha)^2.\]
To understand the middle inequality note that by \Cref{lem:dot_product_x_tilde_w}, when a mistake is made on a positive example $\vec{x}_t$, $\vec{\tilde{x}}_t^T\vec{w}\leq 0$.
The last inequality comes from $R = \max_t|\vec{z}_t|$ implies $\max_t|\vec{\tilde{x}}_t|
\leq R+\alpha$.

Similarly, if we make a mistake on a negative example $\vec{x}_t$, then we have:
\[(\vec{w}-\vec{\tilde{x}}_t)^T(\vec{w}-\vec{\tilde{x}}_t) = |\vec{w}|^2-2\vec{\tilde{x}}_t^T\vec{w}+|\vec{\tilde{x}}_t|^2\leq |\vec{w}|^2+|\vec{\tilde{x}}_t|^2\leq |\vec{w}|^2+ (R+\alpha)^2.\]

By \Cref{lem:dot_product_x_tilde_w}, when a mistake is made on a negative example $\vec{x}_t$, $\vec{\tilde{x}}_t^T\vec{w}\geq 0$, which implies the middle inequality.

Finally, if the algorithm makes $M$ mistakes, then $\vec{w}^T\vec{w^*}\geq M$ and $|\vec{w}|^2\leq M(R+\alpha)^2$, or equivalently, $|\vec{w}|\leq (R+\alpha)\sqrt{M}$. Using the fact that $\vec{w}^T\vec{w^*}/|\vec{w^*}|\leq |\vec{w}|$, we have
\begin{align*}
M/|\vec{w^*}| &\leq (R+\alpha)\sqrt{M} \implies
\sqrt{M} \leq (R+\alpha)|\vec{w^*}| \implies
M \leq (R+\alpha)^2|\vec{w}^*|^2.
\end{align*}
\end{proof}

\section{Known Weighted $\ell_1$ Costs}
\label{sec:generalization}
In this section, we provide an algorithm for the weighted $\ell_1$ costs setting. Unlike the $\ell_2$ case, the modifications to the classical Perceptron algorithm in \Cref{alg:strategic-Perceptron} do not suffice; and our algorithm for this setting is more involved. Here is the key difference: In the $\ell_2$ costs setting, the individuals always manipulate in direction of the current classifier $\vec{w}$. However, in the weighted $\ell_1$ setting this is no longer the case. This brings up two challenges to our approach. First, there may be multiple utility maximizing manipulation directions. Second, the manipulation direction may have a negative dot product with $\vec{w^*}$. We overcome these two challenges, and provide an algorithm for this setting.

As a reminder, in the weighted $\ell_1$ costs setting, there are coordinate unit vectors $\{\vec{e}_1,\cdots,\vec{e}_d\}$ with cost of manipulation $1/\alpha_i$ along $\vec{e}_i$. We need to make one further assumption for this setting. We assume for all $1\leq i \leq d$,  $\vec{e}_i^T\vec{w^*} \geq 0$. In other words, we assume that each feature is defined so that larger is better.  This is natural for settings such as hiring, admissions, loan applications, etc.

\paragraph{Overview of \Cref{alg:strategic-Perceptron-L1}.} The algorithm starts by predicting all points as positive until it makes a mistake. Note that during this period, individuals do not have incentive to manipulate.
From that point on, the algorithm classifies all points $\vec{x}_t$ such that ${\vec{x}_t^T\vec{w}}/{|\vec{w}|}-\alpha_i{\vec{w}^T\vec{e}_i}/{|\vec{w}|} \geq 0$ as positive, and the rest as negative; where $\vec{e}_i$ is the manipulation direction which will be defined later. Similar to \Cref{alg:strategic-Perceptron}, whenever the algorithm makes a mistake, the predictor $\vec{w}$ is updated with a surrogate value, $\vec{\tilde{x}}_t$,  in \Cref{def:surrogate_L1}.

Compared to \Cref{alg:strategic-Perceptron}, we have two further steps. As discussed above, the first challenge with weighted $\ell_1$ costs is that with an arbitrary $\vec{w}$, there may be multiple utility maximizing manipulation directions, and we may not be able to distinguish along which vector individuals manipulated. 
Since in the weighted $\ell_1$ costs setting, the cost of manipulation can be written as a convex combination of costs in coordinate vectors, there always exists a coordinate vector, $\vec{e}_i$, such that manipulating along that is utility maximizing. Consider all the coordinate vectors like $\vec{e}_j$ that are utility maximizing, i.e.,  have the highest $\alpha_j\cdot{ \vec{w}^T\vec{e}_j}/{|\vec{w}|}$.
To make the manipulation direction unique, we add a \emph{tie-breaking step} to the algorithm. This step adds a small multiple $\eta>0$, of an arbitrary  utility maximization coordinate vector $\vec{e}_i$, to $\vec{w}$ to break the tie. Note that any positive value of $\eta$ breaks the tie. We set this value in our analysis purposes in \Cref{thm:num-updates-mutiple-direction-manipulation} in a way to make sure the number of mistakes our algorithm makes does not increase much. 

We need to add another step to address the second challenge: With an arbitrary $\vec{w}$ the direction that the individuals manipulate along may not have a positive dot product with $\vec{w^*}$, i.e., the individuals may choose to move along one of the vectors $\{-\vec{e}_1,\cdots,-\vec{e}_d\}$.
In order to incentivize individuals to only manipulate along $\{\vec{e}_1,\cdots,\vec{e}_d\}$, and not  $\{-\vec{e}_1,\cdots,-\vec{e}_d\}$, we do the following \emph{correction step} after each update. If $\vec{e}_j^T\vec{w}<0$ for any $\vec{e}_j\in\{\vec{e}_1,\cdots,\vec{e}_d\}$, we set the $j^{th}$ coordinate of $\vec{w}$ to $0$ by adding the smallest multiple of $\vec{e}_j$, denoted by $\mu_j$, to $\vec{w}$ to make $\vec{e}_j^T\vec{w}$ nonnegative. Therefore, $\mu_j=0$ if $\vec{e}_j^T\vec{w} \geq 0$, and $\mu_j=-\vec{e}_j^T\vec{w}$, otherwise; implying $\forall j$  $\mu_j \geq 0$.

\begin{algorithm}[tb]
\SetNoFillComment
$\vec{w} \leftarrow \vec{0}$\;
\For{$t=1, 2, \cdots$}{
    Given example $\vec{x}_t$:\\
    \If{$|\vec{w}|$ is $0$}{
        predict $+$\;
        \If{the prediction was a mistake}{
            $\vec{w}\leftarrow\vec{w}-\vec{x}_t$\;
            \tcc{Correction Step}
            \For{$j=1, 2, \cdots, d$}{
                $\vec{w}\leftarrow\vec{w} + \mu_j\vec{e}_j$, where $\mu_j = \max(0, -\vec{e}_j^T\vec{w})$\;
            }
            \tcc{Tie-breaking Step}
            $i \leftarrow \argmax_j  \alpha_j\cdot \frac{\vec{w}^T\vec{e}_j}{|\vec{w}|}$\;
            $\vec{w} \leftarrow \vec{w} + \eta \vec{e}_i$\;
        }
    }
    \Else{
        predict $sgn(\frac{\vec{x}_t^T\vec{w}}{|\vec{w}|}-\alpha_i\cdot\frac{\vec{w}^T\vec{e}_i}{|\vec{w}|})$\;
        \lIf{the prediction was a mistake and $\vec{x}_t$ was $+$}{
        $\vec{w} \leftarrow \vec{w}+\vec{\tilde{x}}_t$%
        }
        \lIf{the prediction was a mistake and $\vec{x}_t$ was $-$}{$\vec{w} \leftarrow \vec{w}-\vec{\tilde{x}}_t$%
        }
        \tcc{Correction Step}
        \For{$j=1, 2, \cdots, d$}{
            $\vec{w}\leftarrow \vec{w} + \mu_j\vec{e}_j$ where $\mu_j = \max(0, -\vec{e}_j^T\vec{w})$\;
        }
        \tcc{Tie-breaking Step}
        $i \leftarrow \argmax_j  \alpha_j\cdot \frac{\vec{w}^T\vec{e}_j}{|\vec{w}|}$\;
        $\vec{w} \leftarrow \vec{w} + \eta \vec{e}_i$\;
    }
}
\caption{Strategic Perceptron for weighted $\ell_1$ costs}
\label{alg:strategic-Perceptron-L1}
\end{algorithm}

With the unique manipulation direction, similar to the $\ell_2$ costs setting, we are now able to choose a surrogate value along the manipulation direction.

\begin{definition}[$\vec{\tilde{x}}_t$, surrogate data point in weighted $\ell_1$ setting]\label{def:surrogate_L1}
Let $\vec{e}_i$ be the \emph{unique} utility maximizing coordinate vector, 
i.e., $i = \argmax_j  \alpha_j {\vec{w}^T\vec{e}_j}/{|\vec{w}|}$. We define surrogate data point, $\vec{\tilde{x}}_t$, as follows.
\[\vec{\tilde{x}}_t=
\begin{cases}
\vec{x}_t-\vec{e}_i\cdot\alpha_i,&\quad\text{if $\vec{x}_t$ is $-$ and  }\frac{\vec{x}_t^T\vec{w}}{|\vec{w}|}=\alpha_i\cdot\frac{\vec{w}^T\vec{e}_i}{|\vec{w}|};\\
\vec{x}_t, &\quad\text{otherwise.}
\end{cases}
\]
\end{definition}

\begin{lemma}\label{lem:mu_upperbound}
$\mu_j \leq R+\alpha_j.$
\end{lemma}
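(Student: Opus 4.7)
The plan is to track how negative the $j$-th coordinate of $\vec{w}$ can become after a single update, using the invariant that the correction step plus the tie-breaking step restore $\vec{e}_j^T\vec{w}\geq 0$ for all $j$. Thus, at the moment the correction step that defines $\mu_j$ is about to be executed, $\vec{w}$ equals the value produced by the most recent update $\vec{w}\leftarrow \vec{w}_{\text{old}}\pm \vec{\tilde{x}}_t$ (or $\vec{w}\leftarrow -\vec{x}_t$ in the $|\vec{w}|=0$ case), and I claim $\vec{e}_j^T\vec{w}_{\text{old}}\geq 0$. The claim follows by an easy induction: the correction step of the previous iteration ensures every coordinate is nonnegative, and the tie-breaking step only adds $\eta\vec{e}_i$ with $\eta>0$, which cannot make any coordinate negative.

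Given $\vec{e}_j^T\vec{w}_{\text{old}}\geq 0$, we have $\mu_j=\max(0,-\vec{e}_j^T\vec{w})\leq |\vec{e}_j^T\vec{\tilde{x}}_t|$ (or $|\vec{e}_j^T\vec{x}_t|$ in the degenerate first update), so it suffices to show $|\vec{e}_j^T\vec{\tilde{x}}_t|\leq R+\alpha_j$ in every case of Definition~\ref{def:surrogate_L1}. The initial $|\vec{w}|=0$ update fires only on a negative example, and during this period no individual has an incentive to manipulate, so $\vec{x}_t=\vec{z}_t$ and $|\vec{e}_j^T\vec{x}_t|\leq |\vec{z}_t|\leq R\leq R+\alpha_j$.

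For subsequent updates I will split by sign of the label. On a mistake on a positive example, the individual could not reach the positive region (otherwise they would manipulate and be classified correctly), so $\vec{x}_t=\vec{z}_t$; by definition $\vec{\tilde{x}}_t=\vec{x}_t$, and hence $|\vec{e}_j^T\vec{\tilde{x}}_t|\leq R\leq R+\alpha_j$. On a mistake on a negative example, either $\vec{\tilde{x}}_t=\vec{x}_t=\vec{z}_t$ (no manipulation) with the same bound, or $\vec{x}_t$ sits on the manipulation hyperplane, meaning $\vec{x}_t=\vec{z}_t+\beta\vec{e}_i$ with $0\leq \beta\leq \alpha_i$ and $\vec{\tilde{x}}_t=\vec{z}_t+(\beta-\alpha_i)\vec{e}_i$. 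When $j\neq i$, $\vec{e}_j^T\vec{\tilde{x}}_t=\vec{e}_j^T\vec{z}_t$ has magnitude at most $R$, and when $j=i$, $\vec{e}_j^T\vec{\tilde{x}}_t=\vec{e}_j^T\vec{z}_t+\beta-\alpha_j$ with $|\vec{e}_j^T\vec{z}_t|\leq R$ and $|\beta-\alpha_j|\leq \alpha_j$, giving $|\vec{e}_j^T\vec{\tilde{x}}_t|\leq R+\alpha_j$.

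Combining these bounds with the invariant yields $\mu_j\leq R+\alpha_j$. The one subtlety to be careful about is the case analysis around the tie-broken manipulation direction $\vec{e}_i$, which determines which coordinate of the surrogate absorbs the $\alpha_i$ shift; all other coordinates are unchanged from $\vec{z}_t$, and the bound follows immediately. I do not anticipate any genuine obstacle here beyond keeping the cases straight and verifying the invariant $\vec{e}_j^T\vec{w}\geq 0$ is maintained across the correction and tie-breaking steps.
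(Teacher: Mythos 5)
Your proof is correct and follows essentially the same route as the paper's: both establish the invariant that $\vec{e}_j^T\vec{w}\geq 0$ at the end of every round (preserved by the correction and tie-breaking steps), and then bound $\mu_j$ by the magnitude of the $j$-th coordinate of the surrogate, which is at most $R+\alpha_j$. Your case analysis of $|\vec{e}_j^T\vec{\tilde{x}}_t|$ is somewhat more explicit than the paper's one-line bound, but the argument is the same.
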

\begin{proof}
We can show at the end of each round, $\vec{e}_j^T\vec{w} \geq 0$. Initially, $\vec{w}=0$, therefore $\vec{e}_j^T\vec{w} = 0$.
Suppose at the end of round $t-1$, $\vec{e}_j^T\vec{w} \geq 0$. Assume in round $t$, $\vec{w}$ gets updated by adding or subtracting $\vec{\tilde{x}}_t$ or $\vec{x}_t$. By assumption, the $j^{th}$ coordinate of $\vec{x}_t$ is in $[-R,R]$, and therefore the $j^{th}$ coordinate of $\vec{\tilde{x}}_t$ is in $[-R-\alpha_j, R+\alpha_j]$. Taken together, $\mu_j \leq R+\alpha_j$. Note that by adding $\eta\vec{e}_i$ to $\vec{w}$, $\vec{e}_j^T\vec{w}$ remains nonnegative.
\end{proof}

The following theorem upper bounds the number of mistakes made by \Cref{alg:strategic-Perceptron-L1}.

\begin{theorem}
\label{thm:num-updates-mutiple-direction-manipulation}
Consider a sequence of examples before manipulation $\vec{z}_1, \vec{z}_2,\cdots$, which are observed as $\vec{x}_1, \vec{x}_2,\cdots$.
Consider vector $\vec{w^*}$ such that $\vec{z}_t^T\vec{w^*}\geq 1$ for positive examples, and $\vec{z}_t^T\vec{w^*}\leq -1$ for negative examples. \Cref{alg:strategic-Perceptron-L1} makes at most $(1+(d+1)(R+\alpha)^2)|\vec{w^*}|^2$ mistakes, where $R = \max_t |\vec{z}_t|$, and $\alpha = \max\{\alpha_1,\cdots,\alpha_d\}$.

\end{theorem}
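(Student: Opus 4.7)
The plan is to adapt the classical Perceptron analysis behind Theorem~\ref{thm:num_updates}, tracking two quantities through every round---$\vec{w}^T\vec{w}^*$ (which should grow with mistakes) and $|\vec{w}|^2$ (which should grow slowly)---and finishing with Cauchy--Schwarz. The wrinkle is that each mistake now triggers three sub-updates: the main $\pm\vec{\tilde{x}}_t$ update, the coordinate-wise correction step, and the $\eta\vec{e}_i$ tie-breaking step, so I have to account for all three when tallying per-mistake changes.

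First I would prove weighted-$\ell_1$ analogs of Lemmas~\ref{lem:separability} and~\ref{lem:dot_product_x_tilde_w}. The separability claim ($\vec{\tilde{x}}_t^T\vec{w}^*\geq 1$ for positive examples and $\leq -1$ for negative ones) follows by writing a positive agent's manipulation as $\vec{x}_t=\vec{z}_t+\beta\vec{e}_i$ with $0\leq\beta\leq\alpha_i$ and using the assumption $\vec{e}_j^T\vec{w}^*\geq 0$; the negative case exploits the $-\alpha_i\vec{e}_i$ term in Definition~\ref{def:surrogate_L1}. The invariant $\vec{w}^T\vec{w}^*\geq 0$ is preserved because both the correction step ($\sum_j\mu_j\vec{e}_j$ with $\mu_j\geq 0$) and the tie-breaking step ($\eta\vec{e}_i$) only add nonnegative multiples of the $\vec{e}_j$'s, each of which has $\vec{e}_j^T\vec{w}^*\geq 0$. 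The sign-on-mistake analog follows from a forbidden-region observation: no observed point lies strictly between $\vec{x}^T\vec{w}=0$ and $\vec{x}^T\vec{w}=\alpha_i\vec{w}^T\vec{e}_i$, since an agent whose true $\vec{z}_t^T\vec{w}$ lay in that gap could and would manipulate. This forces $\vec{\tilde{x}}_t^T\vec{w}\leq 0$ on positive mistakes and $\vec{\tilde{x}}_t^T\vec{w}\geq 0$ on negative mistakes.

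Next I would quantify per-mistake changes. For $\vec{w}^T\vec{w}^*$: the main $\pm\vec{\tilde{x}}_t$ step adds at least $1$ (by separability), while the correction and tie-breaking steps contribute nonnegatively, so $\vec{w}^T\vec{w}^*\geq M$ after $M$ mistakes. For $|\vec{w}|^2$ I would bound three contributions. The main step contributes at most $(R+\alpha)^2$ because $|\vec{\tilde{x}}_t|\leq R+\alpha$ and the cross term with $\vec{w}$ has the favorable sign by the new sign lemma. The correction step adds $\sum_j\mu_j\vec{e}_j$ along orthogonal directions with $\mu_j\leq R+\alpha_j\leq R+\alpha$ by Lemma~\ref{lem:mu_upperbound}, so its cumulative contribution is at most $\sum_j\mu_j^2\leq d(R+\alpha)^2$. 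The tie-breaking step contributes $2\eta\vec{e}_i^T\vec{w}_{\text{post-corr}}+\eta^2$; by taking $\eta$ sufficiently small I would ensure its per-mistake cost is at most $1$. Summing, $|\vec{w}|^2$ grows by at most $1+(d+1)(R+\alpha)^2$ per mistake, so $M/|\vec{w}^*|\leq|\vec{w}|\leq\sqrt{M\bigl(1+(d+1)(R+\alpha)^2\bigr)}$, yielding the stated bound.

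The hardest part will be controlling the tie-breaking contribution to $|\vec{w}|^2$. The cross term $2\eta\vec{e}_i^T\vec{w}_{\text{post-corr}}$ is nonnegative but may scale with $|\vec{w}|$, so a naively fixed $\eta$ risks a circular dependence on the mistake bound itself. I expect to resolve this by choosing $\eta$ as a function of $R$, $\alpha$, $d$, and $|\vec{w}^*|$---small enough that its cumulative effect across mistakes is absorbed into the ``$+1$'' of the bound---while relying only on the strict positivity of $\eta$ in the earlier argument that makes the utility-maximizing manipulation direction unique.
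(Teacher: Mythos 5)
Your plan reproduces the paper's proof almost step for step: the same two potentials $\vec{w}^T\vec{w^*}$ and $|\vec{w}|^2$, the same weighted-$\ell_1$ analogs of Lemmas~\ref{lem:separability} and~\ref{lem:dot_product_x_tilde_w} (via the decomposition $\vec{x}_t=\vec{z}_t+\beta\vec{e}_i$, the assumption $\vec{e}_j^T\vec{w^*}\geq 0$, and the forbidden-region argument), the same three-part accounting of each update, and the same use of $\mu_j\vec{e}_j^T(\vec{w}+\vec{\tilde{x}}_t)\leq 0$ together with Lemma~\ref{lem:mu_upperbound} to charge the correction step at most $d(R+\alpha)^2$. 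The final Cauchy--Schwarz step and the resulting bound match as well.

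The one place you diverge is exactly the step you flag as hardest, and your proposed fix does not work as stated. You suggest choosing a \emph{fixed} $\eta$ as a function of $R$, $\alpha$, $d$, and $|\vec{w^*}|$. Two problems: first, in the known-cost setting the learner is not assumed to know $\gamma=1/|\vec{w^*}|$, so the algorithm cannot set such an $\eta$; second, even granting that knowledge, bounding the cross term $2\eta\,\vec{e}_i^T\vec{w}\leq 2\eta|\vec{w}|$ by a constant requires an a priori upper bound on $|\vec{w}|$, which is precisely what the mistake bound is supposed to deliver --- the circularity you yourself identify. (A bootstrap on the number of mistakes so far could be made to work, but you would have to carry it out.) The paper sidesteps all of this by making $\eta$ \emph{round-dependent}: at each update it sets $\eta=\frac{1}{4|\vec{w}|+8(R+\alpha)+2}$, where $|\vec{w}|$ is the current weight vector's norm. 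Any strictly positive $\eta$ suffices for the tie-breaking role, and this choice gives $\eta^2+\eta\bigl(2|\vec{w}|+4(R+\alpha)\bigr)\leq 1/4+1/2\leq 1$ immediately, absorbing the tie-breaking contribution into the ``$+1$'' of the per-mistake growth of $|\vec{w}|^2$ with no circularity. With that substitution your argument closes.
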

\begin{proof}
Similar to the proof of \Cref{thm:num_updates}, we keep track of two quantities $\vec{w}^T\vec{w^*}$ and $|\vec{w}|^2$. First, we show each time a mistake is made, $\vec{w}^T\vec{w^*}$ increases by at least $1$. Then we find an upper bound on the increase of $|\vec{w}|^2$.

Starting from the current $\vec{w}$, the algorithm follows three steps to update: addition/subtraction of $\vec{\tilde{x}}_t$, the correction step, and the tie-breaking step. As in the algorithm $\vec{e}_i$ is the manipulation direction. 

If the algorithm makes a mistake on a positive example the new value of $\vec{w}$ is $\vec{w}+\vec{\tilde{x}}_t+\eta\vec{e}_i+\sum_j\mu_j\vec{e}_j$. Therefore,
\begin{align*}
\left(\vec{w}+\vec{\tilde{x}_t}+\eta\vec{e}_i+\sum_j\mu_j\vec{e}_j\right)^T\vec{w}^* 
= \vec{w}^T\vec{w^*}+ \vec{\tilde{x}}_t^T\vec{w^*}+\eta\vec{e}_i^T\vec{w^*}+\sum_j\mu_j\vec{e}_j^T\vec{w^*}
\geq \vec{w}^T\vec{w^*} + 1;    
\end{align*}
where the inequality holds because first using the ideas from \Cref{lem:separability}, $\vec{\tilde{x}}_t^T\vec{w^*} \geq 1$ for the positive examples the algorithm makes a mistake on and $\vec{\tilde{x}}_t^T\vec{w^*} \leq -1$ for the negative examples the algorithm makes a mistake on, and second, for all $j$, $\vec{e}_j^T\vec{w^*} \geq 0$ by assumption, and $\mu_j\geq 0$.

Similarly, If the algorithm makes a mistake on a negative example, we have:
\begin{align*}
\left(\vec{w}-\vec{\tilde{x}}_t+\eta\vec{e}_i+\sum_j\mu_j\vec{e}_j\right)^T\vec{w^*} 
= \vec{w}^T\vec{w^*}- \vec{\tilde{x}}_t^T\vec{w^*}+\eta\vec{e}_i^T\vec{w^*}+\sum_j\mu_j\vec{e}_j^T\vec{w^*}
\geq \vec{w}^T\vec{w^*} + 1.    
\end{align*}

Next, on each mistake we claim $|\vec{w}|^2$ increases by at most $(d+1)(R+\alpha)^2 + 1$. If the algorithm makes a mistake on a positive example, we have:
\begin{align*}
&\left|\vec{w}+\vec{\tilde{x}}_t+\eta\vec{e}_i+\sum_j\mu_j\vec{e}_j\right|^2\\
&= \left|\vec{w}+\vec{\tilde{x}}_t+\eta\vec{e}_i\right|^2+\left|\sum_j\mu_j\vec{e}_j\right|^2 + 2\left(\sum_j\mu_j\vec{e}_j\right)^T(\vec{w}+\vec{\tilde{x}}_t+\eta\vec{e}_i)\\
&=|\vec{w}+\vec{\tilde{x}}_t+\eta\vec{e}_i|^2+\sum_j|\mu_j\vec{e}_j|^2 + 2\sum_j\mu_j\vec{e}_j^T(\vec{w}+\vec{\tilde{x}_t}+\eta\vec{e}_i)\\
&\leq |\vec{w}+\vec{\tilde{x}_t}+\eta\vec{e}_i|^2+\sum_j|\mu_j\vec{e}_j|^2 + 2\sum_j\eta\mu_j\vec{e}_j^T\vec{e}_i\\
&=|\vec{w}+\vec{\tilde{x}_t}+\eta\vec{e}_i|^2+\sum_j|\mu_j|^2 + 2\eta\mu_i\\
&=|\vec{w}|^2+|\vec{\tilde{x}}_t|^2+|\eta\vec{e}_i|^2+2\vec{w}^T\vec{\tilde{x}}_t+2\eta\vec{w}^T\vec{e}_i+2\eta\vec{\tilde{x}}_t^T\vec{e}_i+ \sum_j|\mu_j|^2 + 2\eta\mu_i\\
&\leq |\vec{w}|^2+(R+\alpha)^2 + \eta^2+ 0 + 2\eta |\vec{w}| + 2\eta(R+\alpha) + d(R+\alpha)^2 + 2\eta(R+\alpha)\\
&\leq |\vec{w}|^2 + (d+1)(R+\alpha)^2 + \eta^2+ \eta (2|\vec{w}|+4(R+\alpha))\\
&\leq |\vec{w}|^2 + (d+1)(R+\alpha)^2 + 1/4+1/2\\ &\leq |\vec{w}|^2 + (d+1)(R+\alpha)^2 + 1;
\end{align*}
where the first equality is the result of expansion. The second uses $\vec{e}_j^T\vec{e}_k=0$ for $j\neq k$. The inequality in the third row uses $\mu_j=0$ when $\vec{e}_j^T(\vec{w}+\vec{\tilde{x}}_t)\geq 0$, and $\mu_j > 0$ when
$\vec{e}_j^T(\vec{w}+\vec{\tilde{x}}_t) < 0$, implying $\mu\vec{e}_j^T(\vec{w}+\vec{\tilde{x}}_t)\leq 0$. The fourth row uses
$\vec{e}_j^T\vec{e}_k=0$ for $k\neq j$ and $\vec{e_j}^T\vec{e_j}=1$. The fifth row is the result of expansion. The sixth row substitutes each term with an upper bound using $|\vec{\tilde{x}}_t| \leq R+\alpha$ and $\vec{w}^T\vec{\tilde{x}}_t \leq 0$, similar to the arguments from \Cref{lem:dot_product_x_tilde_w}, and $\mu_j \leq R+\alpha$, by \Cref{lem:mu_upperbound}.  
The eighth row results by setting  $\eta = \frac{1}{4|\vec{w}|+8(R+\alpha)+2}$.
The last row sums up and upper bounds similar terms.

Similarly, if the algorithm makes a mistake on a negative example, we have:
\begin{align*}
\left|\vec{w}-\vec{\tilde{x}}_t+\eta\vec{e}_i+\sum_j\mu_j\vec{e}_j\right|^2
&\leq |\vec{w}|^2 + (d+1)(R+\alpha)^2 + 1.
\end{align*}

Therefore, after each mistake, $|\vec{w}|^2$ increases by at most $(d+1)(R+\alpha)^2 + 1$. The rest of the proof is similar to the proof of \Cref{thm:num_updates}, concluding that the total number of mistakes is at most $((d+1)(R+\alpha)^2+1)|\vec{w^*}|^2$.
\end{proof}

\section{Unknown Costs}
\label{sec:strategic-Perceptron-unknown-alpha}

\begin{figure}[b]
    \centering
    \includegraphics[width=0.40\textwidth]{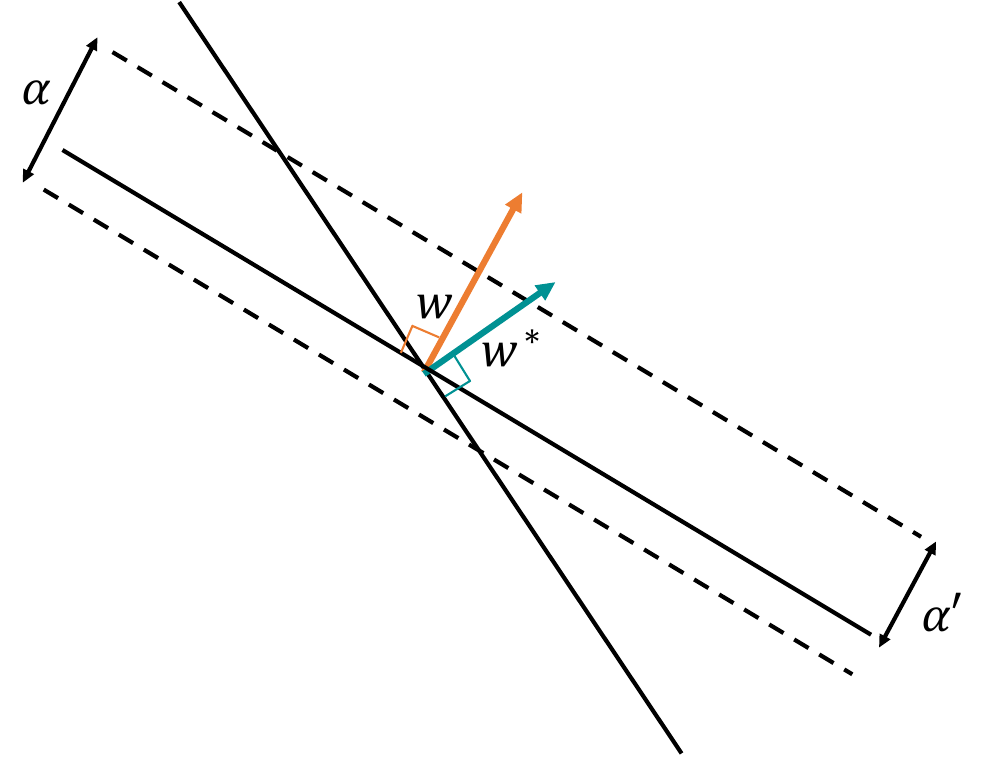}
    \caption{Strategic Perceptron with unknown manipulation cost, when $\alpha \geq \alpha'$. The top dashed line represents the manipulation hyperplane. The margin between the two dashed lines represents the forbidden region.}
    \label{fig:Perceptron-unknown-alpha}
\end{figure}
The main result of this section is generalizing our algorithms to the unknown costs setting. The generalization holds for $\ell_2$ costs . However, it does not extend fully to weighted $\ell_1$ costs  and only works for a specific case. The algorithm for unknown $\ell_2$ costs is presented in \Cref{sec:unknown-L2}. The case of unknown $\ell_1$ costs is studied in \Cref{sec:unknown-L1}.

\subsection{$\ell_2$ Costs}\label{sec:unknown-L2}

In this section, we provide an algorithm that makes at most a bounded number of mistakes when the manipulation cost, $1/\alpha$, is unknown. \Cref{alg:strategic-Perceptron} is used as a subroutine to evaluate our estimate of $\alpha$. First, we show \Cref{alg:strategic-Perceptron} works efficiently if the estimated value, $\alpha'$, is in proximity of the real value (when $\alpha'$ is in the interval of length $\gamma/2$ below $\alpha$). Using this idea we can run a linear search for $\alpha$ with step size $\gamma/2$. However, we show we can do better than a linear search. The key ingredient that lets us outperform the linear search is the ability to distinguish whether the estimate is below or above the real value. Using this idea we run a binary search to find a proper estimate and come up with an efficient algorithm. 

For convenience, we will present the algorithm assuming $\gamma$ is known.  At the end we show how to remove this assumption. Below, we explain these steps more formally.

\subsubsection*{Case 1: $0 \leq \alpha-\alpha' \leq \gamma/2$}
First, we consider the case of $0 \leq \alpha-\alpha' \leq \gamma/2$. Suppose \Cref{alg:strategic-Perceptron} takes $\alpha'$ instead of $\alpha$ as input. Also, suppose $\vec{\tilde{x}}_t$ is defined with respect to $\alpha'$ instead of $\alpha$. In \Cref{pr:num_updates_unknown_alpha}, we show if $0\leq \alpha-\alpha' \leq \gamma/2$, \Cref{alg:strategic-Perceptron} with these modifications, makes at most $4(R+\alpha'+\gamma/2)^2|\vec{w^*}|^2$ mistakes. 
We need the following two lemmas for proving the proposition. Proofs of \Cref{lem:separability-unknown-alpha} \Cref{lem:dot_product_x_tilde_w_unknown} are along the lines of proofs of \Cref{lem:separability,lem:dot_product_x_tilde_w} respectively.
\begin{lemma}
\label{lem:separability-unknown-alpha}
Consider data points $\vec{\tilde{x}}_t$ as defined in \Cref{def:surrogate} w.r.t.\ $\alpha'$ such that $0 \leq \alpha-\alpha' \leq \gamma/2$. These data points are $1/2$-separable; i.e., for positive data points, $\vec{\tilde{x}}_t^T\vec{w^*}\geq 1/2$; and for negative data points, $\vec{\tilde{x}}_t^T\vec{w^*}\leq -1/2$. Also, throughout the execution of \Cref{alg:strategic-Perceptron} with $\alpha'$, $\vec{w}^T\vec{w^*}\geq 0$.
\end{lemma}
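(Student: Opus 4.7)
The plan is to imitate the three-part induction used in the proof of Lemma \ref{lem:separability}, with the key new step being a Cauchy--Schwarz estimate that absorbs the ``slack'' $\alpha - \alpha' \leq \gamma/2$ into a loss of a factor of $2$ in the separation margin. I would first observe that Observations \ref{obs:manipulation_line} and \ref{obs:firbidden_region} remain valid with $\alpha'$ in place of $\alpha$: rational manipulators still move exactly to the classifier's threshold (now at $\vec{x}^T\vec{w}/|\vec{w}| = \alpha'$), since the classifier's decision rule is what they face, and no observed point lands in the strip $0 < \vec{x}_t^T\vec{w}/|\vec{w}| < \alpha'$ (they could afford to cross, since even the whole strip has width $\alpha' \leq \alpha$). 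So the surrogate differs from $\vec{x}_t$ only at the threshold line, and then only for negative examples.

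Next I would run the induction on $\vec{w}^T\vec{w^*} \geq 0$. The base case is identical to Lemma \ref{lem:separability}: while $\vec{w}=\vec{0}$ nobody manipulates, so the first mistake happens on an unmanipulated negative example and $\vec{w} \leftarrow -\vec{z}_t$ gives $\vec{w}^T\vec{w^*} \geq 1 > 0$. For the inductive step, assume $\vec{w}^T\vec{w^*} \geq 0$ at the end of round $t-1$ and consider round $t$. For any point off the threshold line, $\vec{\tilde{x}}_t = \vec{x}_t = \vec{z}_t$ and the required bound is just the margin assumption. For a positive point on the threshold, write $\vec{x}_t = \vec{z}_t + \beta\,\vec{w}/|\vec{w}|$ with $\beta \in [0,\alpha]$, so $\vec{\tilde{x}}_t^T\vec{w^*} = \vec{z}_t^T\vec{w^*} + \beta\,\vec{w}^T\vec{w^*}/|\vec{w}| \geq 1 \geq 1/2$.

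The only genuinely new computation is for a manipulated negative point on the threshold line, where $\vec{\tilde{x}}_t = \vec{x}_t - \alpha'\,\vec{w}/|\vec{w}| = \vec{z}_t + (\beta - \alpha')\,\vec{w}/|\vec{w}|$ with $\beta \in [0,\alpha]$. Here
\[
\vec{\tilde{x}}_t^T\vec{w^*} \;=\; \vec{z}_t^T\vec{w^*} + (\beta - \alpha')\,\frac{\vec{w}^T\vec{w^*}}{|\vec{w}|} \;\leq\; -1 + (\alpha - \alpha')\,|\vec{w^*}| \;\leq\; -1 + \frac{\gamma}{2}\cdot\frac{1}{\gamma} \;=\; -\tfrac12,
\]
using Cauchy--Schwarz on $\vec{w}^T\vec{w^*}/|\vec{w}|$, the inductive hypothesis $\vec{w}^T\vec{w^*} \geq 0$, the hypothesis $\alpha - \alpha' \leq \gamma/2$, and $|\vec{w^*}| = 1/\gamma$. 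This is the step where the $1/2$ constant is forced on us and where the bound on $\alpha - \alpha'$ is used.

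Finally, closing the induction is routine: on a positive mistake $\vec{w} \leftarrow \vec{w} + \vec{\tilde{x}}_t$ preserves $\vec{w}^T\vec{w^*} \geq 0$ because both summands have nonnegative inner product with $\vec{w^*}$ (the surrogate now contributes at least $1/2$); the negative case is symmetric. The main obstacle is really just being careful that Observations \ref{obs:manipulation_line} and \ref{obs:firbidden_region} transfer verbatim to the $\alpha'$ threshold, so that the case analysis on the surrogate is complete; once that is in place, the Cauchy--Schwarz estimate above does all the remaining work.
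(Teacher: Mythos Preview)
Your proposal is correct and follows essentially the same three-step induction as the paper's proof of Lemma~\ref{lem:separability-unknown-alpha}; the only cosmetic difference is that you name Cauchy--Schwarz explicitly where the paper writes the equivalent bound $\vec{w}^T\vec{w^*}/(|\vec{w^*}|\,|\vec{w}|)\leq 1$, and you spell out why Observations~\ref{obs:manipulation_line} and~\ref{obs:firbidden_region} transfer to the $\alpha'$ threshold, which the paper takes for granted.
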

\begin{proof}
The proof uses the same three steps as \Cref{lem:separability}. The first and the third steps are identical to \Cref{lem:separability}. Here, we argue for the second step, i.e., if at the end of step $t-1$, $\vec{w}^T\vec{w^*} \geq 0$, then at step $t$, $\vec{\tilde{x}}_t^T\vec{w^*}\geq 1/2$ for positive points, and $\vec{\tilde{x}}_t^T\vec{w^*}\leq -1/2$ for negative points.

When \Cref{alg:strategic-Perceptron} is run with $\alpha'$, by \Cref{def:surrogate}, for any points such that ${\vec{x}_t^T\vec{w}}/{|\vec{w}|} \neq \alpha'$,  we have $\vec{\tilde{x}}_t = \vec{x}_t$. By \Cref{obs:manipulation_line}, these points are not manipulated, i.e., $\vec{{x}}_t = \vec{z}_t$. This implies $\vec{\tilde{x}}_t = \vec{z}_t$ which implies the claim for these points. Thus, we only need to argue for the data points such that ${\vec{x}_t^T\vec{w}}/{|\vec{w}|} = \alpha'$. Consider such data points. For the positive data points, we have, $\vec{\tilde{x}}_t = \vec{x}_t = \vec{z}_t + \beta\cdot \vec{w}/|\vec{w}|$, where $0\leq \beta \leq \alpha$. Therefore, $\vec{\tilde{x}}_t^T\vec{w^*} = \vec{z}_t^T\vec{w^*}+\beta\cdot \vec{w}^T\vec{w^*}/|\vec{w}| \geq \vec{z}_t^T\vec{w^*} \geq 1$.
The first inequality holds because by  the assumption of this step, $\vec{w}^T\vec{w^*}\geq 0$. On the other hand, for the negative data points we have $\vec{\tilde{x}}_t = \vec{x}_t-\alpha' \cdot \vec{w}/|\vec{w}|$, where $\vec{x}_t = \vec{z}_t + \beta \cdot \vec{w}/|\vec{w}|$ and $0\leq \beta \leq \alpha$. This implies $\vec{\tilde{x}}_t = \vec{z}_t + (\beta - \alpha') \cdot \vec{w}/|\vec{w}|$. By multiplying with $\vec{w^*}$, we get $\vec{\tilde{x}}_t^T\vec{w^*} = \vec{z}_t^T\vec{w^*}+(\beta-\alpha')\cdot \vec{w}^T\vec{w^*}/|\vec{w}| \leq \vec{z}_t^T\vec{w^*}+(\alpha-\alpha')\cdot \vec{w}^T\vec{w^*}/|\vec{w}|$. Using $0 \leq \alpha-\alpha'\leq \gamma/2$ and $\gamma=1/|\vec{w^*}|$, we have $\vec{\tilde{x}}_t^T\vec{w^*} \leq \vec{z}_t^T\vec{w^*}+ \vec{w}^T\vec{w^*}/(2|\vec{w^*}||\vec{w}|) \leq \vec{z}_t^T\vec{w^*} + 1/2 \leq -1/2$.
\end{proof}

\begin{lemma}\label{lem:dot_product_x_tilde_w_unknown}
Suppose \Cref{alg:strategic-Perceptron} is run with $\alpha'$ such that $0\leq \alpha-\alpha' \leq \gamma/2$.
When the algorithm makes a mistake on a positive example $\vec{x}_t$, $\vec{\tilde{x}}_t^T\vec{w}\leq 0$; and when it makes a mistake on a negative example $\vec{x}_t$, $\vec{\tilde{x}}_t^T\vec{w}\geq 0$.
\end{lemma}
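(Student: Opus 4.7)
The plan is to mirror the proof of Lemma \ref{lem:dot_product_x_tilde_w}, with the only new ingredient being the verification that the forbidden-region argument of Observation \ref{obs:firbidden_region} still goes through when the decision threshold $\alpha'$ underestimates the true cost parameter $\alpha$. The lemma only uses the half of the hypothesis that says $\alpha' \leq \alpha$; the sharper bound $\alpha - \alpha' \leq \gamma/2$ is needed elsewhere (in Lemma \ref{lem:separability-unknown-alpha}) but plays no role here.

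First, I would re-establish the forbidden-region claim for the modified algorithm: no observed point $\vec{x}_t$ satisfies $0 < \vec{x}_t^T\vec{w}/|\vec{w}| < \alpha'$. The argument is the one in Observation \ref{obs:firbidden_region}, invoking $\alpha' \leq \alpha$. A hypothetical point in this strip is either manipulated, in which case utility maximization forces it to lie on the new threshold line $\vec{x}^T\vec{w}/|\vec{w}| = \alpha'$ — a contradiction; or it is unmanipulated, in which case its true position sits within distance less than $\alpha' \leq \alpha$ of the threshold, so the agent could have afforded to manipulate into the positive half-space, again contradicting utility maximization.

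Granted this, the two cases proceed exactly as in Lemma \ref{lem:dot_product_x_tilde_w} with $\alpha$ replaced by $\alpha'$. For a positive example on which a mistake is made, $\vec{x}_t^T\vec{w}/|\vec{w}| < \alpha'$ combined with the forbidden-region claim forces $\vec{x}_t^T\vec{w}/|\vec{w}| \leq 0$; Definition \ref{def:surrogate} then gives $\vec{\tilde{x}}_t = \vec{x}_t$, so $\vec{\tilde{x}}_t^T\vec{w} \leq 0$. For a negative mistake, $\vec{x}_t^T\vec{w}/|\vec{w}| \geq \alpha'$; if the inequality is strict, $\vec{\tilde{x}}_t = \vec{x}_t$ and $\vec{\tilde{x}}_t^T\vec{w} > 0$, while if equality holds, $\vec{\tilde{x}}_t = \vec{x}_t - \alpha'\vec{w}/|\vec{w}|$ and the dot product with $\vec{w}$ is exactly $\alpha'|\vec{w}| - \alpha'|\vec{w}| = 0$.

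The one conceptual subtlety — and hence the only "hard" part of an otherwise mechanical transcription — is noticing that the forbidden-region proof requires precisely $\alpha' \leq \alpha$, since it is exactly this inequality that certifies an unmanipulated point in $(0,\alpha')$ could afford the move to the threshold. Everything else is routine reuse of the earlier argument.
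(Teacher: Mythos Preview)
Your proposal is correct and follows essentially the same approach as the paper. The only cosmetic difference is that the paper identifies the full forbidden strip as $\alpha'-\alpha < \vec{x}_t^T\vec{w}/|\vec{w}| < \alpha'$ (width $\alpha$, shifted so its upper edge sits at the threshold $\alpha'$), whereas you use the narrower strip $(0,\alpha')$; since $\alpha'-\alpha \leq 0$, both versions yield the needed conclusion $\vec{x}_t^T\vec{w} \leq 0$ for misclassified positives, and the negative case is handled identically in both.
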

\begin{proof}
First, we consider the positive points. The algorithm makes a mistake on a positive example only if ${\vec{x}^T\vec{w}}/{|\vec{w}|} < \alpha'$. Similar to \Cref{obs:firbidden_region}, in this case there is a margin without any observed data points. However, as illustrated in \Cref{fig:Perceptron-unknown-alpha}, this margin is located differently; such that for no points, $\alpha'-\alpha < {\vec{x}^T\vec{w}}/{|\vec{w}|} < \alpha'$. Thus, for any positive example that the algorithm makes a mistake on, $\vec{x}^T\vec{w}\leq 0$. By \Cref{def:surrogate}, $\vec{\tilde{x}}_t = \vec{x}_t$ for all positive examples. Therefore, ${\vec{x}^T\vec{w}}\leq 0$ implies $\vec{\tilde{x}}^T\vec{w}\leq 0$. Second, we consider negative points. For negative examples, the algorithm makes a mistake only if ${\vec{x}^T\vec{w}}/{|\vec{w}|} \geq \alpha'$. If the inequality is strict, i.e., ${\vec{x}^T\vec{w}}/{|\vec{w}|} > \alpha'$, by \Cref{def:surrogate}, $\vec{\tilde{x}}_t=\vec{x}_t$, and therefore $\vec{\tilde{x}}_t^T\vec{w} \geq 0$. If ${\vec{x}^T\vec{w}}/{|\vec{w}|} = \alpha'$, again using  \Cref{def:surrogate}, we have $\vec{\tilde{x}}^T\vec{w} = 0$.
\end{proof}

\begin{proposition}
\label{pr:num_updates_unknown_alpha}
When $0\leq \alpha-\alpha' \leq \gamma/2$, \Cref{alg:strategic-Perceptron}
makes at most $4(R+\alpha'+\gamma/2)^2|\vec{w^*}|^2$ mistakes.
\end{proposition}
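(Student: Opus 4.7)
The plan is to mirror the proof of Theorem~\ref{thm:num_updates}, tracking the two quantities $\vec{w}^T\vec{w^*}$ and $|\vec{w}|^2$ across mistakes, and then combining the lower bound on the first with the upper bound on the second via Cauchy--Schwarz. The only difference from the known-$\alpha$ setting is that the separability margin of the surrogates degrades from $1$ to $1/2$ (Lemma~\ref{lem:separability-unknown-alpha}), and the effective radius of the surrogates grows from $R+\alpha$ to roughly $R+\alpha'+\gamma/2$; both of these account precisely for the factor of $4$ in the stated bound.

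For the first quantity, on every mistake I would plug in the update rule exactly as in Theorem~\ref{thm:num_updates}: on a positive mistake the new $\vec{w}^T\vec{w^*}$ becomes $\vec{w}^T\vec{w^*}+\vec{\tilde{x}}_t^T\vec{w^*}$, and on a negative mistake it becomes $\vec{w}^T\vec{w^*}-\vec{\tilde{x}}_t^T\vec{w^*}$. By Lemma~\ref{lem:separability-unknown-alpha}, $\vec{\tilde{x}}_t^T\vec{w^*}\geq 1/2$ for positives and $\leq -1/2$ for negatives, so each mistake increases $\vec{w}^T\vec{w^*}$ by at least $1/2$.

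For the second quantity, expanding $|\vec{w}\pm\vec{\tilde{x}}_t|^2 = |\vec{w}|^2\pm 2\vec{w}^T\vec{\tilde{x}}_t+|\vec{\tilde{x}}_t|^2$, the cross term is nonpositive on mistakes by Lemma~\ref{lem:dot_product_x_tilde_w_unknown} (it is $\leq 0$ for positive mistakes and $\geq 0$ for negative mistakes, with the appropriate sign). The key remaining step is to bound $|\vec{\tilde{x}}_t|$. The subtlety is that the surrogate is defined with respect to the estimate $\alpha'$, but individuals actually manipulate using the true $\alpha$, so I would argue carefully: for a positive example $\vec{\tilde{x}}_t=\vec{x}_t$ has norm at most $R+\alpha$, while for a negative example $\vec{\tilde{x}}_t = \vec{z}_t+(\beta-\alpha')\vec{w}/|\vec{w}|$ with $0\le\beta\le\alpha$, whose norm is at most $R+\max(\alpha,\alpha')$. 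In either case, using $\alpha\le\alpha'+\gamma/2$, I get $|\vec{\tilde{x}}_t|\le R+\alpha'+\gamma/2$, so each mistake increases $|\vec{w}|^2$ by at most $(R+\alpha'+\gamma/2)^2$.

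Combining, after $M$ mistakes I would have $\vec{w}^T\vec{w^*}\geq M/2$ and $|\vec{w}|\leq (R+\alpha'+\gamma/2)\sqrt{M}$. The inequality $\vec{w}^T\vec{w^*}/|\vec{w^*}|\leq |\vec{w}|$ then yields $M/(2|\vec{w^*}|)\le (R+\alpha'+\gamma/2)\sqrt{M}$, i.e.\ $M\leq 4(R+\alpha'+\gamma/2)^2|\vec{w^*}|^2$. The only nonroutine step is the norm bound on $\vec{\tilde{x}}_t$ in the negative case, where one must handle the possibility that $\beta-\alpha'$ has either sign; everything else is a direct transcription of the known-cost argument with the degraded constants from Lemmas~\ref{lem:separability-unknown-alpha} and~\ref{lem:dot_product_x_tilde_w_unknown}.
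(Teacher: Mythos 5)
Your proposal is correct and follows essentially the same route as the paper's proof: track $\vec{w}^T\vec{w^*}$ (increasing by at least $1/2$ per mistake via Lemma~\ref{lem:separability-unknown-alpha}) and $|\vec{w}|^2$ (increasing by at most the squared surrogate radius via Lemma~\ref{lem:dot_product_x_tilde_w_unknown}), then combine. The only cosmetic difference is that you bound $|\vec{\tilde{x}}_t|$ by $R+\alpha'+\gamma/2$ directly, whereas the paper bounds it by $R+\alpha$ and converts to $R+\alpha'+\gamma/2$ only in the final inequality; both are valid.
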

\begin{proof}
Using \Cref{lem:separability-unknown-alpha,lem:dot_product_x_tilde_w_unknown}, the rest of the proof is similar to \Cref{thm:num_updates} and is deferred to the Appendix.
\end{proof}
\subsubsection*{Case 2: $\alpha < \alpha'$}
Suppose $\alpha'$ is larger than $\alpha$. By \Cref{obs:firbidden_region}, when \Cref{alg:strategic-Perceptron} is run with the real value of $\alpha$, no data point is observed by algorithm in the margin $0 < \vec{x}_t^T \vec{w}/|\vec{w}|<\alpha$. However, when the estimate is larger, since we overestimate by how far individuals can manipulate, \Cref{obs:firbidden_region} no longer holds. Therefore, if the algorithm observes a point in the margin $0 < \vec{x}_t^T \vec{w}/|\vec{w}|<\alpha'$, we realize that the estimate is large, and we need to refine it. On the other hand, while we have not observed any such points, the algorithm makes at most $(R+\alpha')^2|\vec{w^*}|^2$ mistakes. This statement is summarized and proved below.

\begin{proposition}
\label{pr:num_updates_unknown_alpha_large_guess}
Suppose \Cref{alg:strategic-Perceptron} is run with $\alpha'$, such that $\alpha' > \alpha$, and is halted if for a data-point $\vec{x}_t$, $0 < {\vec{x}_t^T\vec{w}}/{|\vec{w}|} < \alpha'$. This modified algorithm
makes at most $(R+\alpha')^2|\vec{w^*}|^2+1$ mistakes.
\end{proposition}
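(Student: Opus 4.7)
The plan is to transplant the potential-function proof of Theorem~\ref{thm:num_updates} to this setting, with $\alpha$ replaced by $\alpha'$ throughout, and then to add one mistake to cover the halting example. The halting condition guarantees that during execution no observed point lies in the strip $0 < \vec{x}_t^T \vec{w}/|\vec{w}| < \alpha'$, which is precisely the role Observation~\ref{obs:firbidden_region} played in the known-cost analysis, so the skeletons of Lemmas~\ref{lem:separability} and \ref{lem:dot_product_x_tilde_w} carry over with only small modifications.

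First I would prove a separability analog: for every observed $\vec{x}_t$ the surrogate $\vec{\tilde{x}}_t$ defined with respect to $\alpha'$ satisfies $\vec{\tilde{x}}_t^T \vec{w^*} \geq 1$ for positive and $\leq -1$ for negative examples, and the invariant $\vec{w}^T\vec{w^*} \geq 0$ is maintained. Most cases are handled exactly as in Lemma~\ref{lem:separability}; the only subtle case is a negative point observed on the hyperplane $\vec{x}_t^T \vec{w}/|\vec{w}| = \alpha'$. Writing $\vec{x}_t = \vec{z}_t + \beta \vec{w}/|\vec{w}|$ with $0 \leq \beta \leq \alpha$ (the true manipulation budget), we have $\vec{\tilde{x}}_t = \vec{z}_t + (\beta - \alpha')\vec{w}/|\vec{w}|$; since $\alpha' > \alpha \geq \beta$ this correction is in the direction $-\vec{w}/|\vec{w}|$, which by the maintained invariant has nonpositive dot product with $\vec{w^*}$, so $\vec{\tilde{x}}_t^T \vec{w^*} \leq \vec{z}_t^T \vec{w^*} \leq -1$. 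I would then prove the dot-product analog of Lemma~\ref{lem:dot_product_x_tilde_w}: a positive mistake has $\vec{x}_t^T\vec{w}/|\vec{w}| < \alpha'$, and the halting condition upgrades this to $\vec{x}_t^T\vec{w}/|\vec{w}| \leq 0$, giving $\vec{\tilde{x}}_t^T \vec{w} = \vec{x}_t^T \vec{w} \leq 0$; the negative case is unchanged.

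With these analogs in hand, the Theorem~\ref{thm:num_updates} calculation shows each updating mistake increases $\vec{w}^T\vec{w^*}$ by at least $1$ and $|\vec{w}|^2$ by at most $(R+\alpha')^2$, since $|\vec{\tilde{x}}_t| \leq R + \alpha'$ (the bound $|\beta - \alpha'| \leq \alpha'$ on the hyperplane absorbs the extra slack). Combining via $\vec{w}^T\vec{w^*}/|\vec{w^*}| \leq |\vec{w}|$ yields at most $(R+\alpha')^2 |\vec{w^*}|^2$ updating mistakes. The extra $+1$ covers the halting observation itself: the algorithm predicts negative on any point in $(0, \alpha')$, so if the halting example is actually positive it is a mistake, and since the algorithm halts rather than updates on it, this single misclassification is not accounted for by the potential-function argument. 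The main obstacle I anticipate is verifying that over-subtracting $\alpha'$ rather than the true $\beta$ in the negative surrogate does not break separability; but as noted above, the over-subtraction occurs in a direction with nonnegative dot product with $\vec{w^*}$, so it only strengthens the inequality rather than weakens it.
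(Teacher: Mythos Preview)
Your proposal is correct and follows essentially the same approach as the paper: the paper's proof simply says that the argument of Theorem~\ref{thm:num_updates} goes through verbatim on the non-halting examples (those with $\vec{x}_t^T\vec{w}/|\vec{w}|\leq 0$ or $\geq \alpha'$), giving at most $(R+\alpha')^2|\vec{w^*}|^2$ mistakes, plus one more for the halting point. You have spelled out the details the paper leaves implicit---in particular the observation that over-subtracting $\alpha'$ in the negative surrogate only moves it further in the $-\vec{w}/|\vec{w}|$ direction and hence preserves (indeed strengthens) the margin against $\vec{w^*}$---but the structure and conclusion are identical.
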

\begin{proof}
Similar to the proof of \Cref{thm:num_updates}, the maximum number of mistakes \Cref{alg:strategic-Perceptron} with estimated manipulation cost $1/\alpha'$ makes on observed data points $\vec{x}_t$ where ${\vec{x}_t^T\vec{w}}/{|\vec{w}|} \leq 0$ or ${\vec{x}_t^T\vec{w}}/{|\vec{w}|}\geq\alpha'$ is at most $(R+\alpha')^2|\vec{w^*}|^2$. If a data point $\vec{x}_t$ is observed such that $0 < {\vec{x}_t^T\vec{w}}/{|\vec{w}|}<\alpha'$, it implies $\alpha'>\alpha$ and the algorithm halts, and at most one more mistake is made on this data point. Therefore, the total number of mistakes is at most $(R+\alpha')^2|\vec{w^*}|^2+1$.
\end{proof}

\subsubsection*{Case 3: $\alpha' < \alpha - \gamma/2$}\label{sec:last_case}
We infer from \Cref{pr:num_updates_unknown_alpha,pr:num_updates_unknown_alpha_large_guess} that if the number of mistakes is greater than $\max\{4(R+\alpha'+\gamma/2)^2|\vec{w^*}|^2,(R+\alpha')^2|\vec{w^*}|^2+1\} = 4(R+\alpha'+\gamma/2)^2|\vec{w^*}|^2$ then $\alpha' < \alpha - \gamma/2$. Note that the equality holds since the number of mistakes is an integer.

\subsubsection*{Putting Everything Together}

After discussing the three cases, we are now ready to explain \Cref{alg:strategic-Perceptron-unknown-alpha}. This algorithm, uses a binary search scheme to find a predictor in a bounded number of mistakes. The algorithm starts with $\alpha' = 0$. For each fixed $\alpha'$ we consider $4(R+\alpha'+\gamma/2)^2|\vec{w^*}|^2$ as the maximum number of allowed mistakes. Whenever we exceed this bound using the discussion in \Cref{sec:last_case} we learn that $\alpha'$ is too small. Also whenever we see a data point $\vec{x}_t$ such that $0\leq \vec{x}_t^T \vec{w}/|\vec{w}|<\alpha$ as explained above we learn that $\alpha'$ is too large. Distinguishing between the cases where $\alpha'$ is too large or too small allows us to refine the upper bound and lower bound on $\alpha'$ until $0 \leq \alpha-\alpha' \leq \gamma/2$. The following theorem shows that the total number of mistakes is bounded during the whole process.

\begin{algorithm}[tb]
\SetNoFillComment
\caption{Strategic Perceptron with unknown manipulation cost}
\label{alg:strategic-Perceptron-unknown-alpha}
   $\alpha''\leftarrow 0, \alpha' \leftarrow 0$\;
    \While{examples are arriving}{
    Run \Cref{alg:strategic-Perceptron} with estimate $\alpha'$ on the sequence of arriving examples, halt if $\#mistakes  > 4(R+\alpha'+\gamma/2)^2|\vec{w^*}|^2$ or if for an example $\vec{x}_t$, $0 < \frac{\vec{x}_t^T\vec{w}}{|\vec{w}|}<\alpha'$\;
   \If{$\#mistakes > 4(R+\alpha'+\gamma/2)^2|\vec{w^*}|^2$}{
   \tcc{guessed value $\alpha'$ is small.}
   $\alpha''\leftarrow\alpha'$\;
   $\alpha' \leftarrow \min\{\max\{2\alpha',\gamma/2\}, R\}$\;
   continue\;
   }
   \ElseIf{for an example $\vec{x}_t$, $0 < \frac{\vec{x}_t^T\vec{w}}{|\vec{w}|}<\alpha'$}{
   \tcc{guessed value $\alpha'$ is large.}
    $\alpha' \leftarrow (\alpha''+\alpha')/2$\;
    continue\;
    }
   }
\end{algorithm}

\begin{theorem}
\label{thm:l2-unknown-cost}
\Cref{alg:strategic-Perceptron-unknown-alpha} makes at most $\mathcal{O}(R^2|\vec{w^*}|^2\log(R|\vec{w^*}|))$ mistakes.
\end{theorem}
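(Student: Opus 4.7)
The plan is to bound the total mistake count as the product of (a) the number of outer while-loop iterations of Algorithm~\ref{alg:strategic-Perceptron-unknown-alpha} and (b) the maximum number of mistakes incurred in any single iteration, and to show these are $O(\log(R|\vec{w^*}|))$ and $O(R^2|\vec{w^*}|^2)$ respectively.

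For the per-iteration bound I would appeal directly to the halting conditions. Each pass through the while loop with current guess $\alpha'$ terminates as soon as either the mistake count exceeds $4(R+\alpha'+\gamma/2)^2|\vec{w^*}|^2$ or a data point lands in the forbidden strip $0 < \vec{x}_t^T\vec{w}/|\vec{w}| < \alpha'$. In either case, Propositions~\ref{pr:num_updates_unknown_alpha} and~\ref{pr:num_updates_unknown_alpha_large_guess} guarantee at most $4(R+\alpha'+\gamma/2)^2|\vec{w^*}|^2+1$ mistakes in that iteration. Since Algorithm~\ref{alg:strategic-Perceptron-unknown-alpha} keeps $\alpha' \leq R$ and $\gamma|\vec{w^*}| = 1$, this simplifies to $O(R^2|\vec{w^*}|^2)$.

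For the iteration count I would maintain two invariants: (i) $\alpha''$ is always a valid lower bound on $\alpha$, because $\alpha'' \gets \alpha'$ is executed only on a mistake-overflow halt, which by the discussion in Section~\ref{sec:last_case} implies $\alpha' < \alpha - \gamma/2$; and (ii) any $\alpha'$ triggering the forbidden-strip halt satisfies $\alpha' > \alpha$, by Observation~\ref{obs:firbidden_region}. I would then split the execution into a \emph{doubling phase} that ends at the first "too large" halt, during which $\alpha'$ at least doubles (starting from $\gamma/2$ and capped at $R$) after each "too small" halt, so this phase contributes at most $\lceil \log_2(2R/\gamma) \rceil = O(\log(R|\vec{w^*}|))$ iterations; and a subsequent \emph{bracketing phase} in which we carry a confirmed interval $[\alpha'', U]$ containing $\alpha$, where $U$ is the smallest "too large" value seen so far. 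A "too large" halt refines $U$ downward and sets $\alpha'$ to the midpoint of $[\alpha'', U]$; a "too small" halt replaces $\alpha''$ by the current $\alpha'$. Once $U - \alpha'' < \gamma/2$, every subsequent midpoint $\alpha'$ satisfies $0 \leq \alpha - \alpha' \leq \gamma/2$, and Proposition~\ref{pr:num_updates_unknown_alpha} then guarantees the algorithm never halts again.

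The main obstacle is the bookkeeping in the bracketing phase: after a midpoint update $\alpha' \gets (\alpha'' + U)/2$, a "too small" halt at that value doubles $\alpha'$ to $\alpha'' + U > U$, which must then trigger another "too large" halt and refresh $U$. I would handle this by showing that each such paired sequence of events still reduces $U - \alpha''$ by a constant factor, so the bracketing phase also consumes only $O(\log(R|\vec{w^*}|))$ iterations. Combining both phases yields $O(\log(R|\vec{w^*}|))$ total outer iterations, and multiplying by the per-iteration bound $O(R^2|\vec{w^*}|^2)$ gives the claimed total mistake bound.
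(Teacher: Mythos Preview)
Your proposal follows the same outline as the paper: bound the total mistakes as (number of outer iterations) $\times$ (per-iteration mistakes), invoke Propositions~\ref{pr:num_updates_unknown_alpha} and~\ref{pr:num_updates_unknown_alpha_large_guess} for the latter, and argue the former is $O(\log(R|\vec{w^*}|))$. The paper's own proof of the iteration count is a one-liner: it asserts that Algorithm~\ref{alg:strategic-Perceptron-unknown-alpha} is a binary search over the $2R|\vec{w^*}|$ candidate values spaced $\gamma/2$ apart, hence $\log(2R|\vec{w^*}|)$ iterations. You go further than the paper by recognizing that the algorithm does not retain an explicit upper bound and that the doubling rule can overshoot any previously discovered ``too large'' value, and you sketch a two-phase (doubling then bracketing) argument to handle this.

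There is, however, a gap in your bracketing-phase bookkeeping. Your paired-event argument shows that each ``too small'' halt, together with one subsequent ``too large'' halt, halves $U-\alpha''$; this bounds the number of ``too small'' halts by $O(\log(R|\vec{w^*}|))$. But it does not bound the total number of outer iterations, because after a doubling overshoot to $\alpha'=2\alpha''$ the algorithm may need a \emph{run} of consecutive ``too large'' halts before $\alpha'$ drops back below $\alpha$: each such halt halves $\alpha'-\alpha''$, which starts at $\alpha''$ and must reach roughly $\alpha-\alpha''$, so the run has length about $\log_2\bigl(\alpha''/(\alpha-\alpha'')\bigr)$. As $\alpha''$ approaches $\alpha$ these runs grow, and summing them over all ``too small'' events gives $O(\log^2(R|\vec{w^*}|))$ outer iterations rather than $O(\log(R|\vec{w^*}|))$. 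Your sketch, as written, therefore does not reach the stated single-log bound; you would need either a sharper accounting of the ``too large'' runs or to argue that the algorithm effectively performs a standard bisection (the paper simply asserts the latter without addressing the overshoot you identified).
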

\begin{proof}
In \Cref{alg:strategic-Perceptron-unknown-alpha}, the candidates for $\alpha$ are $\gamma/2$ apart and the number of them is $2R|\vec{w^*}|$. Since we are doing a binary search on these candidates, the total number of iterations of binary search is at most $\log(2R|\vec{w^*}|)$. \Cref{pr:num_updates_unknown_alpha}, \Cref{pr:num_updates_unknown_alpha_large_guess}, and \Cref{thm:num_updates}, show that in each iteration the total number of mistakes is bounded by $\max\{4(R+\alpha'+\gamma/2)^2|\vec{w^*}|^2,(R+\alpha')^2|\vec{w^*}|^2+1\}$. 
Since we are assuming $\alpha'\leq R$, the total number of mistakes is at most $\mathcal{O}(R^2|\vec{w^*}|^2\cdot\log(R|\vec{w^*}|))$ and the proof is complete.
\end{proof}

\subsubsection*{Unknown $\gamma$}

In the previous steps we assumed knowledge of $\gamma$. However, this assumption is not necessary and we can remove it in the following way. Starting from a guess of $|\vec{w}^*|=\frac{1}{2R}$ (i.e., a guess of $\gamma = 2R$), repeat the following procedure: 
for each guessed value of $|\vec{w^*}|$, \Cref{alg:strategic-Perceptron-unknown-alpha} is executed and if it makes more than the mistake bound of $\mathcal{O}(R^2|\vec{w^*}|^2\log(R|\vec{w^*}|))$, the guessed value for $|\vec{w^*}|$ is doubled (i.e., the guessed value of $\gamma$ is halved) and the procedure is repeated. 
We show by putting this wrapper around \Cref{alg:strategic-Perceptron-unknown-alpha}, the total number of mistakes remains in the same order of magnitude:

\begin{align*}
\sum_{i=-1}^{\log{2R|\vec{w^*}|}} R^2\left(\frac{|\vec{w^*}|}{2^{i}}\right)^2\log\left(\frac{R|\vec{w^*}|}{2^i}\right) = \mathcal{O}(R^2|\vec{w^*}|^2\log(R|\vec{w^*}|))
\end{align*}

\subsection{Weighted $\ell_1$ Costs}\label{sec:unknown-L1}
As observed in \Cref{sec:generalization}, in order for the strategic Perceptron algorithm to work in the weighted $\ell_1$ costs model, it is necessary to identify in what direction the individuals manipulate. The tie-breaking step in \Cref{alg:strategic-Perceptron-L1}, ensured that the manipulation direction is unique and identifiable. In the unknown costs model,  we need to make a guess for the cost in each direction. Since the guessed values are not accurate, we no longer can use them for a tie-breaking step and determine the manipulation direction. This restrains us from having an efficient algorithm for the general case of $\ell_1$ costs. However, for a special case where manipulation is possible in a single direction (finite cost in direction $\vec{e}_1$ and infinite in the others), the manipulation direction is known and the ideas of \Cref{alg:strategic-Perceptron-unknown-alpha} extend to this case. 
\section{Different Costs}\label{sec:different_costs}

In the previous sections, we assumed all individuals have the same utility function. In this section, we show this assumption is not critical for our result and our algorithms still make a bounded number of mistakes and perform almost as well as long as the utility functions are close enough.

More particularly, suppose in the $\ell_2$ costs setting, at each time $t$, the amount that an individual can move, $\alpha_t$, is upper bounded by $\alpha_{\max}$ and lower bounded by $\alpha_{\min}$ such that $0 \leq \alpha_{\max}-\alpha_{\min}\leq \gamma/2$. Using the ideas presented in \Cref{sec:unknown-L2}, we can show that running \Cref{alg:strategic-Perceptron} with $\alpha_{\min}$ as the input and the surrogate data points $\vec{\tilde{x}}_t$ defined with respect to $\alpha_{\min}$ makes a bounded number of mistakes.

\begin{corollary}\label{cor:different_cost}
Suppose for all $t$, $\alpha_{\min} \leq \alpha_t \leq \alpha_{\min}+\gamma/2$. \Cref{alg:strategic-Perceptron} by using parameter $\alpha_{\min}$ makes at most $4(R+\alpha_{\min}+\gamma/2)|\vec{w^*}|^2$ number of mistakes. 
\end{corollary}

\begin{proof}[Proof Outline]
In \Cref{sec:unknown-L2}, the guessed value of $\alpha$ that is used as the input to \Cref{alg:strategic-Perceptron}, is at most $\gamma/2$ smaller than the real value. Similarly, in this case, $\alpha_{\min}$ is at most $\gamma/2$ smaller than any $\alpha_t$.  With a small difference in the terminology of the proofs of \Cref{lem:separability-unknown-alpha,lem:dot_product_x_tilde_w_unknown}, their statements hold and \Cref{pr:num_updates_unknown_alpha} directly implies this corollary.
\end{proof}

\subsubsection*{Weighted $\ell_1$ Costs} Due to similar reasons explained in \Cref{sec:unknown-L1}, the previous result does not extend to general case of weighted $\ell_1$ costs but extends to the special case where manipulation is possible in a single direction.

\section{Target Classifier Not Crossing the Origin}\label{sec:bias}

In this section, we propose an algorithm for the setting where unmanipulated data points are linearly separable, however not by a linear separator passing through the origin. Ordinarily (in the non-strategic setting), this would be handled by
creating an extra fake coordinate, giving each example a value of 1 in that coordinate, and thereby reducing to the case where the separator crosses the origin, as explained in \Cref{ext:bias}. However, in the strategic setting, this reduction breaks down because the condition that $\vec{w}^T\vec{w^*}\geq 0$ (given in \Cref{lem:separability}) is no longer sufficient to guarantee the quality of $\vec{\tilde{x}}_t$, since agents cannot manipulate in this new coordinate (they are no longer manipulating in the direction of $\vec{w}$). Instead, we present a different reduction here that is robust to strategic behavior.  

\begin{algorithm}[tb]
\SetNoFillComment
$\vec{x^+}, \vec{x^-} \leftarrow \vec{0}, \vec{0}$\;
    \While{examples are arriving}{
        predict $+$\;
        \If{the prediction was a mistake on a current example $\vec{x_t}$}{
            $\vec{x^-}\leftarrow \vec{x}_t$\;
            break\;
        }
    }
    \While{examples are arriving}{
        predict $-$\;
        \If{the prediction was a mistake on a current example $\vec{x_t}$}{
            $\vec{x^+}\leftarrow \vec{x}_t$\;
            break\;
        }
    }
        $\lambda\leftarrow 0$\;
        \While{examples are arriving}{
            $mistakes\leftarrow 0$\;
            \tcc{choose a point $\vec{p}$ on the line segment between $\vec{x^+}$ and $\vec{x^-}$.}
            $\vec{p}\leftarrow (1-\lambda)\vec{x^-}+\lambda\vec{x^+}$\;
            \tcc{set the origin to point $\vec{p}$.}
            Run \Cref{alg:strategic-Perceptron} on the sequence of arriving examples in the new coordinate system, i.e. replace each example $\vec{x}_t$ with $\vec{x_t}-\vec{p}$, and halt if $mistakes > 4(2R+\alpha)^2|\vec{w^*}|^2$\;
            \tcc{$\vec{p}$ is not close enough to $\vec{p^*}$, i.e. $|\vec{p}-\vec{p^*}|>\gamma/2$. Try a different $\vec{p}$.}
            $\lambda\leftarrow \lambda +\gamma/2$\;
            continue\;
        }
\caption{Strategic Perceptron with bias for $\ell_2$ costs}
\label{alg:strategic-Perceptron-bias}
\end{algorithm}

For the case of $\ell_2$ costs, we provide  \Cref{alg:strategic-Perceptron-bias} and show that the number of mistakes it makes is at most $\mathcal{O}(R^3|\vec{w^*}|^3)$.

\paragraph{Overview of \Cref{alg:strategic-Perceptron-bias}.} Assume the unmanipulated data points are separable by a linear separator $\vec{w^*}^T\vec{z}+b = 0$. Suppose we can find an arbitrary point $\vec{p^*}$ such that $\vec{w^*}^T\vec{p^*}+b = 0$. If we set the point $\vec{p^*}$ as the new origin, i.e., replacing each example $\vec{z}_t$ with $\vec{z}_t-\vec{p^*}$, then in the new coordinate system the unmanipulated data points 
are linearly separable by a separator that crosses the new origin, and we can use our previous algorithms. However, we are not able to necessarily find a point $\vec{p^*}$ such that $\vec{w^*}^T\vec{p^*}+b = 0$.
Instead, we show how to find a point $\vec{p}$ that is close enough to $\vec{p^*}$.

Initially, we find an unmanipulated positive example $\vec{x}^+$, and an unmanipulated negative example $\vec{x}^-$ by starting our algorithm in the following way: First, predict positive until the first mistake on a negative example $\vec{x}^-$ is made. Next, predict negative until the next mistake is made on some positive example $\vec{x}^+$. Consider the line segment between $\vec{x}^-$ and $\vec{x}^+$. There exists a point $\vec{p^*}$ on this line segment where $\vec{w^*}^T\vec{p^*}+b = 0$. Consider a series of points on this line segment at distance $\gamma/2$ apart. For one of these points, which we call $\vec{p}$, $|\vec{p}-\vec{p^*}|\leq \gamma/2$. \Cref{lem:non_zero_bias_separability} shows if the origin is set to $\vec{p}$, i.e. each data point $\vec{z}_t$ is replaced with $\vec{z}_t-\vec{p}$, there exists a line passing through the new origin $\vec{p}$ that separates original data points with a margin of $\gamma/2$, meaning that for each unmanipulated negative example $\vec{z}_t$, $(\vec{z}_t-\vec{p})^T\vec{w^*}\geq 1/2$, and for each unmanipulated negative example $\vec{z}_t$, $(\vec{z}_t-\vec{p})^T\vec{w^*}\leq -1/2$. When the origin is set to $\vec{p}$, \Cref{lem:non_zero_bias_num_updates} shows that if \Cref{alg:strategic-Perceptron} is executed on the arrived examples in the new coordinate system, i.e. replacing each observed example $\vec{x}_t$ with $\vec{x_t}-\vec{p}$, the number of mistakes is at most $4(2R+\alpha)^2|\vec{w^*}|^2$.
Putting all together, we propose \Cref{alg:strategic-Perceptron-bias} that is a generalization of \Cref{alg:strategic-Perceptron} for the case that original examples are separable by a linear classifier with non-zero bias. \Cref{thm:bias} shows \Cref{alg:strategic-Perceptron-bias} makes at most $\mathcal{O}(R^3|\vec{w^*}|^3)$ mistakes.

\begin{lemma}
\label{lem:non_zero_bias_separability}
Assume the points $\vec{z}_t$ are separable by a linear separator $\vec{w^*}^T\vec{z}+b = 0$ of margin $\gamma = 1/|\vec{w^*}|$, and let $\vec{p^*}$ be a point such that $\vec{w^*}^T\vec{p^*}+b = 0$.  Then, if 
$|\vec{p^*}-\vec{p}|\leq \gamma/2$, the decision boundary $(\vec{z}-\vec{p})^T\vec{w^*}$ has a margin of separation $\gamma/2$.
\end{lemma}
\begin{proof}
First, $|\vec{w^*}^T\vec{p}-\vec{w^*}^T\vec{p^*}|\leq 1/2$ because $|\vec{w^*}^T\vec{p}-\vec{w^*}^T\vec{p^*}|\leq |\vec{w^*}||\vec{p}-\vec{p^*}|\leq |\vec{w^*}|/2|\vec{w^*}|=1/2$.  So, for a positive data point $\vec{z}_t$, if $(\vec{z}_t-\vec{p^*})^T\vec{w^*}\geq 1$, then $(\vec{z}_t-\vec{p})^T\vec{w^*}\geq 1/2$. 
Similarly, for a negative data point $\vec{z}_t$, if $(\vec{z}_t-\vec{p^*})^T\vec{w^*}\leq -1$ then $(\vec{z}_t-\vec{p})^T\vec{w^*}\leq -1/2$. 
\end{proof}

\begin{lemma}
\label{lem:non_zero_bias_num_updates}
For a fixed guess $\vec{p}$ where $|\vec{p^*}-\vec{p}|\leq \gamma/2$, when the origin is set to $\vec{p}$ (i.e., each example $\vec{x}$ is replaced by $\vec{x}-\vec{p}$), then \Cref{alg:strategic-Perceptron-bias} makes at most $4(2R+\alpha)^2|\vec{w^*}|^2$ mistakes.
\end{lemma}

\begin{proof}
Proof of this lemma is in the same lines as the proof of \Cref{thm:num_updates} with some modifications. First, \Cref{lem:non_zero_bias_separability} shows there exists a separator with margin of separation $\gamma/2$ passing through $\vec{p}$. By following the steps in \Cref{lem:separability}, and using a margin of separation $\gamma/2$ instead of $\gamma$, we can show for any positive data point $\vec{{x}}_t$,  $\vec{\tilde{x}}_t^T\vec{w^*}\geq 1/2$, and for any negative data point $\vec{{x}}_t$, $\vec{\tilde{x}}_t^T\vec{w^*}\leq -1/2$. Next, since each data point $\vec{x}_t$ is moved to $\vec{x}_t-\vec{p}$, $\max_t|\vec{\tilde{x}}_t|
\leq 2R+\alpha$.
Finally, by applying these bounds and following the steps in the proof of \Cref{thm:num_updates}, the claim is proved.
\end{proof}

\begin{theorem}\label{thm:bias}
\Cref{alg:strategic-Perceptron-bias} makes at most $16R(2R+\alpha)^2|\vec{w^*}|^3$ mistakes in total.
\end{theorem}

\begin{proof}
Since the length of the line segment between $\vec{x^+}$ and $\vec{x^-}$ is at most $2R$, and guesses tried on this line segment are $\gamma/2$ apart, \Cref{alg:strategic-Perceptron-bias} tries at most $4R/\gamma$ guesses for $\vec{p} $ in total.  For each guess, if the number of mistakes is greater than $4(2R+\alpha)^2|\vec{w^*}|^2$, the next guess is tried. By \Cref{lem:non_zero_bias_num_updates}, if for a guess $\vec{p}$, $|\vec{p}-\vec{p^*}|\leq \gamma/2$, on all the examples that will arrive the number of mistakes does not exceed $4(2R+\alpha)^2|\vec{w^*}|^2$. Therefore the claim is proved. 
\end{proof}

\subsubsection*{Weigthed $\ell_1$ Costs}

We show a reduction from this setting to the case where unmanipulated examples are separable by a hyperplane passing through the origin. First, similar to the $\ell_2$ case, an extra fake coordinate with a value of $1$ is added to each example. The bias term $b$ is absorbed into $\vec{w^*}$, i.e. replacing $\vec{w^*}$ with $(\vec{w^*},b)$. Now for the positive examples $\vec{x}_t^T\vec{w^*}\geq 1$, and for the negative examples $\vec{x}_t^T\vec{w^*}\leq -1$.
Since agents cannot manipulate in the direction of the fake coordinate, the cost of manipulation along this direction is set to be infinity. Next we bound the number of mistakes that \Cref{alg:strategic-Perceptron-L1} makes in this setting. Since a fake coordinate is added to each example, $R$ increases by a value of at most $1$. Since the bias term is absorbed into the $\vec{w^*}$, the value of $|\vec{w}|$ increases by at most $b$. As a result, \Cref{alg:strategic-Perceptron-L1} makes at most $(1+(d+1)(R+\alpha+1)^2)(|\vec{w^*}|+b)^2$ number of mistakes.  At the high level, the reason the direct reduction goes through for the weighted $\ell_1$ case but not the $\ell_2$ case is that in the $\ell_1$ case, agents only manipulate in coordinate directions, and we have assumed that $\vec{w^*}$ is non-negative in each coordinate direction.  So, the algorithm is not hurt if in computing $\tilde{\vec{x}}_t$ it overestimates the amount by which the agent has manipulated.  This is the property that breaks down in the $\ell_2$ case.

\section{Conclusions and Open Problems}
\label{sec:conclusion}
In this work, we showed that if agents have the ability to manipulate their features within an $\ell_2$ ball or a weighted $\ell_1$ ball in order to be classified as positive, then the classic Perceptron algorithm may fail to achieve a bounded number of mistakes even when a perfect linear classifier exists.
We then developed new Perceptron-style algorithms that achieve a finite mistake-bound, not much greater than the classic Perceptron bound in the non-strategic case, in both the $\ell_2$ and weighted $\ell_1$ manipulation setting. In the case that the manipulation costs are unknown to the learner---i.e., the radius of the ball in which agents can modify their features (or the per-coordinate radius in the weighted $\ell_1$ case)---we 
provide an algorithm for the $\ell_2$ costs setting and a specific case of the weighted $\ell_1$ costs setting.

Our work suggests several open problems. First, designing an algorithm for the general case of weighted  $\ell_1$ costs when the costs of manipulation along each coordinate is unknown. This is challenging because given an observed data point, the learner doesn't know which direction it may have manipulated from, and this direction will change as the hypothesis classifier changes. 

Second, for the case of inseparable data points, getting a bound in terms of the hinge-loss of the best separator with respect to the original data points $\vec{z}_1,\vec{z}_2,\cdots$. Our ideas in \Cref{sec:strategic-Perceptron} can be extended to get a bound in terms of the hinge-loss of the best separator of surrogate data points $\vec{\tilde{x}}_1, \vec{\tilde{x}}_2,\cdots$. However, the more interesting question of getting a bound in terms of unmanipulated data points remains open. 
In the following, we show an example where \Cref{alg:strategic-Perceptron} makes an unbounded number of mistakes when data points are not perfectly separable, even though there exists a separator with bounded hinge-loss.

\begin{example}\label{ex:hinge}
 Consider data points $\vec{z}_0=(4, 3)$, $\vec{z}_1=(-1, -7)$, $\vec{z}_2=(3,2)$, $\vec{z}_{3}= (-1,7)$, and $\vec{z}_{4} = (3, -2)$ arriving in order; and then the examples $\vec{z}_1, \vec{z}_2,\vec{z}_3,\vec{z}_4$ repeat forever.  Examples $\vec{z}_2$ and $\vec{z}_4$ have positive labels, and $\vec{z}_0$, $\vec{z}_1$ and $\vec{z}_3$ have negative labels. Suppose that $\alpha=5$.  Note that there exists a vertical linear separator that only makes a mistake on $\vec{z}_0$.  However, as shown below, \Cref{alg:strategic-Perceptron} will make an unbounded number of mistakes.
 
Specifically,
after arrival of $\vec{z}_0$, we have $\vec{w}=(-4, -3)$. Next, $\vec{z}_1$ arrives, and since $\vec{w}^T\vec{z}_1/|\vec{w}| = \alpha$, the algorithm makes a mistake on $\vec{z}_1$ and classifies it as positive. $\vec{z}_1$ doesn't even have to manipulate, i.e. $\vec{z}_1 = \vec{x}_1$.  Surrogate data point $\vec{\tilde{x}}_1=(3,-4)$ is created, and is subtracted from $\vec{w}$ to get $\vec{w}=(-7, 1)$. Example $\vec{z}_2$ arrives and since $\vec{w}^T\vec{z}_2 < 0$, manipulation does not help, therefore, $\vec{x}_2  = \vec{z}_2$. Example $\vec{z}_2$ gets classified as negative mistakenly as shown in \Cref{fig:hing2}. Also, $\vec{\tilde{x}}_2=\vec{x}_2$. Since a mistake is made, $\vec{w}$ gets updated to $\vec{w}+\vec{\tilde{x}}_2 = (-4, 3)$. Next, negative example $\vec{z}_3$ arrives and since $\vec{w}^T\vec{z}_3/|\vec{w}| = \alpha$, it gets misclassified as positive as shown in \Cref{fig:hing3}, and $\vec{x}_3 = \vec{z}_3$. Surrogate data point $\vec{\tilde{x}_{3}} = (3,4)$ is created and $\vec{w}$ is updated to $\vec{w} - \vec{\tilde{x}_{3}} = (-7, -1)$. Next, positive example $\vec{z}_{4}$ arrives, and since $\vec{w}^T\vec{z}_4 < 0$, it does not manipulate and $\vec{z}_4 = \vec{x}_4$. It gets classified as negative mistakenly as shown in \Cref{fig:hing4}. Also, $ \vec{\tilde{x}}_4 = \vec{x}_4$. $\vec{w}$ is updated to $\vec{w}+\vec{\tilde{x}}_4 = (-4, -3)$. If the same four examples arrive over and over again, \Cref{alg:strategic-Perceptron} makes an unbounded number of mistakes. However, there exists a linear classifier $\vec{w^*}=(1,0)$ which makes only one mistake in this scenario as shown in \Cref{fig:hing5}. 

\begin{figure}[ht!]
    \centering
    \begin{subfigure}[b]{0.43\textwidth}
        \centering
        \includegraphics[width=\textwidth]{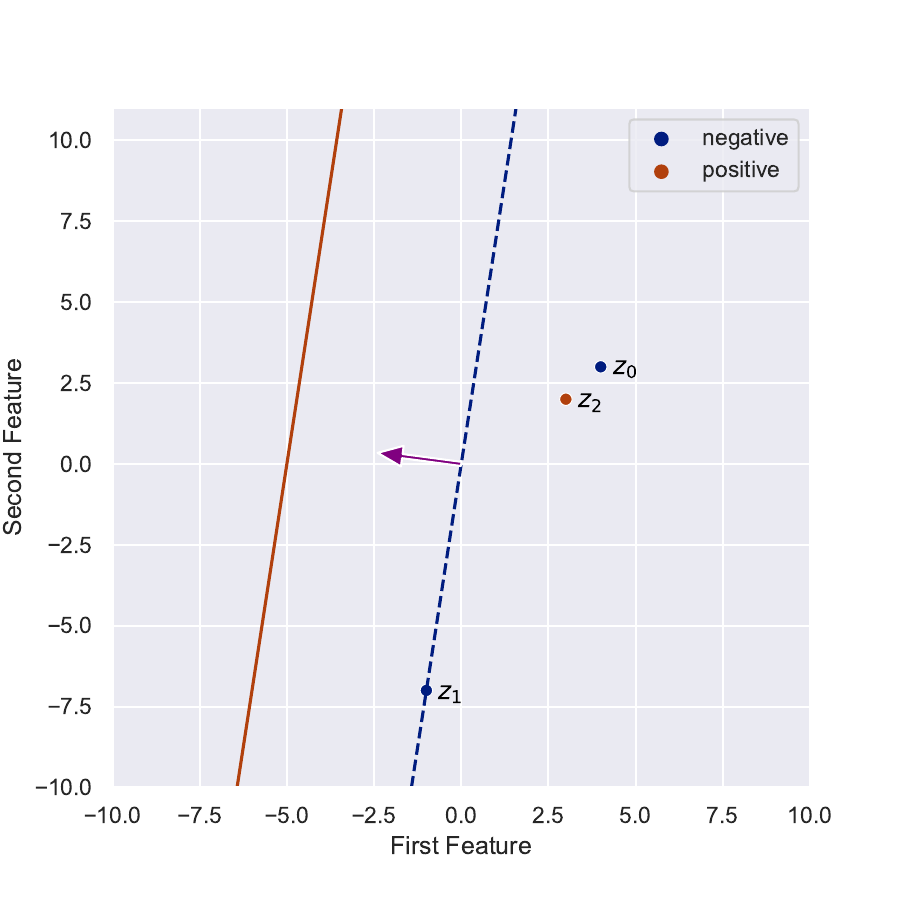}
       \caption{The algorithm makes a mistake on $\vec{z}_2=(3,2)$ when $\vec{w}=(-7,1)$.}
        \label{fig:hing2}
    \end{subfigure}
    \hfill
    \begin{subfigure}[b]{0.43\textwidth}
        \centering
        \includegraphics[width=\textwidth]{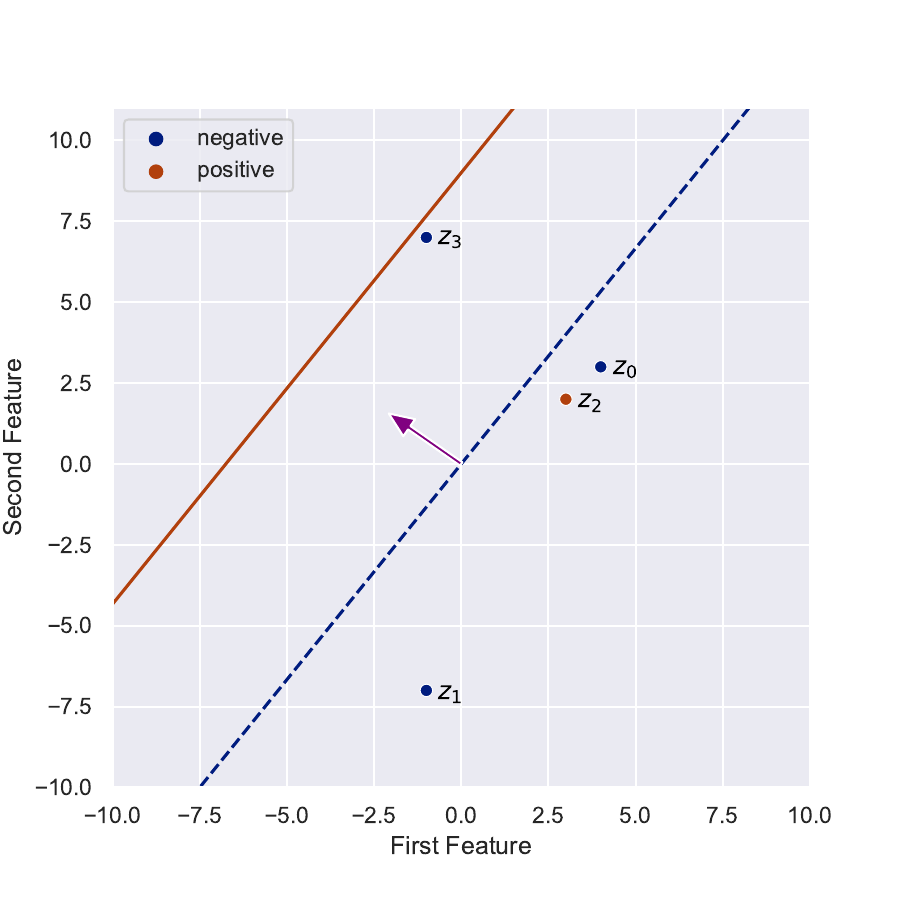}
        \caption{ $\vec{z}_{3}= (-1,7)$ is mistakenly classified when 
        $\vec{w}=(-4,3)$.}
        \label{fig:hing3}
    \end{subfigure}
        \begin{subfigure}[b]{0.43\textwidth}
        \centering
        \includegraphics[width=\textwidth]{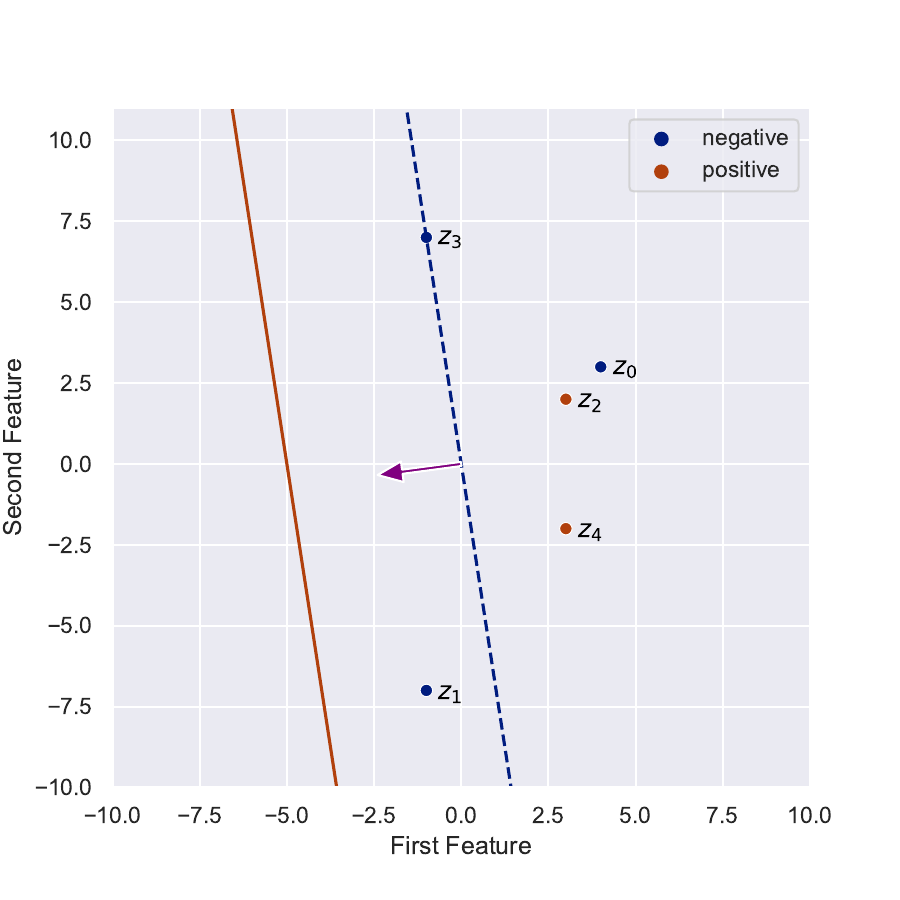}
       \caption{Positive example $\vec{z}_{4}= (3,-2)$ is misclassified when $\vec{w}=(-7,-1)$.}
        \label{fig:hing4}
    \end{subfigure}
    \hfill
    \begin{subfigure}[b]{0.43\textwidth}
        \centering
        \includegraphics[width=\textwidth]{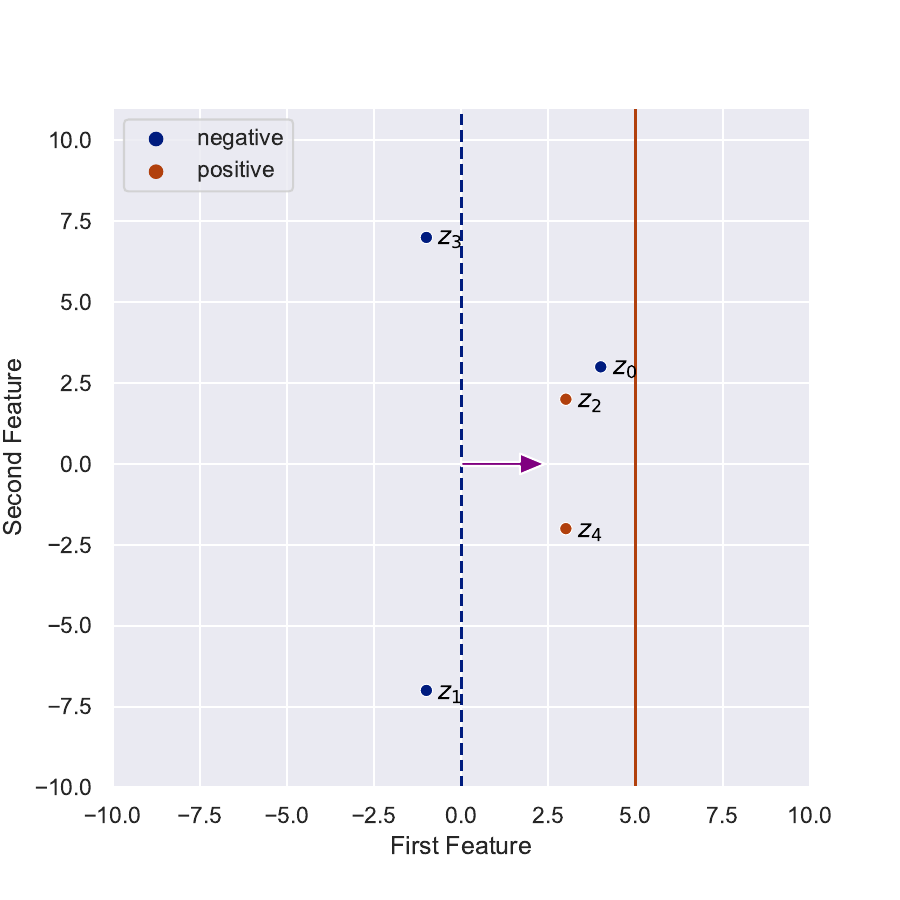}
        \caption{Data is not linearly separable, however, with $\vec{w^*}=(1,0)$, only one mistake is made.}
        \label{fig:hing5}
    \end{subfigure}
    \caption{\Cref{ex:hinge} shows \Cref{alg:strategic-Perceptron} makes an unbounded number of mistakes, where there is a separator with bounded hinge-loss. The dotted line in each figure shows the current $\vec{w}$, and the solid line shows the current classifier. The arrows show the positive direction of each classifier.}
\end{figure}
\end{example}

Third, when the separator of the original data points $\vec{z}_1,\vec{z}_2,\cdots$ does not cross the origin, as studied in \Cref{sec:bias}, \Cref{alg:strategic-Perceptron-bias} makes at most $O(R^3 |\vec{w^*}|^3)$ mistakes in the $\ell_2$ costs setting. Our last open problem is whether it is possible to improve the number of mistakes to $O(R^2 |\vec{w^*}|^2)$.

\bibliographystyle{plain}
\newpage
\footnotesize{
\bibliography{ref}
}
\newpage
\normalsize
\appendix
\section{Proof of \Cref{pr:num_updates_unknown_alpha}}
\label{appendix:proof-thm-num_updates_unknown_alpha}
\begin{proof}

Similar to the proof of \Cref{thm:num_updates}, we keep track of two quantities, $\vec{w}^T\vec{w^*}$, and $|\vec{w}|^2$. First, we show each time a mistake is made, $\vec{w}^T\vec{w^*}$ increases by at least $1/2$.
If we make a mistake on a positive example then,
\[(\vec{w}+\vec{\tilde{x}}_t)^T\vec{w^*} = \vec{w}^T\vec{w^*}+\vec{\tilde{x}}_t^T\vec{w^*}\geq \vec{w}^T\vec{w^*} + 1/2.\]
The last inequality holds by \Cref{lem:separability-unknown-alpha}. Similarly if we make a mistake on a negative example, then,
\[(\vec{w}-\vec{\tilde{x}}_t)^T\vec{w^*} = \vec{w}^T\vec{w^*}-\vec{\tilde{x}}_t^T\vec{w^*}\geq \vec{w}^T\vec{w^*} + 1/2.\]

Next, on each mistake we claim that $|\vec{w}|^2$ increases by at most $(R+\alpha'+\gamma/2)^2$. If we make a mistake on a positive example $\vec{x}_t$ then we have:
\[(\vec{w}+\vec{\tilde{x}}_t)^T(\vec{w}+\vec{\tilde{x}}_t) = |\vec{w}|^2+2\vec{\tilde{x}}_t^T\vec{w}+|\vec{\tilde{x}}_t|^2\leq |\vec{w}|^2+|\vec{\tilde{x}}_t|^2\leq |\vec{w}|^2+ (R+\alpha)^2.\]
The middle inequality is the result of applying \Cref{lem:dot_product_x_tilde_w_unknown}.
The last inequality comes from $R = \max_t|\vec{y}_t|$ implying $\max_t|\vec{\tilde{x}}_t| \leq R+\alpha$.

Similarly, if we make a mistake on a negative example $\vec{x}_t$, then we have,
\[(\vec{w}-\vec{\tilde{x}}_t)^T(\vec{w}-\vec{\tilde{x}}_t) = |\vec{w}|^2-2\vec{\tilde{x}}_t^T\vec{w}+|\vec{\tilde{x}}_t|^2\leq |\vec{w}|^2+|\vec{\tilde{x}}_t|^2\leq |\vec{w}|^2+ (R+\alpha)^2.\]
The inequalities hold similar to the previous case.

Therefore, if we make $M$ mistakes, then $\vec{w}^T\vec{w^*}\geq M/2$ and $|\vec{w}|^2\leq M(R+\alpha)^2$, or equivalently, $|\vec{w}|\leq (R+\alpha)\sqrt{M}$. Using the fact that $\vec{w}^T\vec{w^*}/|\vec{w^*}|\leq |\vec{w}|$, we have,
\begin{align*}
M/2|\vec{w^*}| \leq (R+\alpha)\sqrt{M} &\implies
\sqrt{M} \leq 2(R+\alpha)|\vec{w^*}|\\ 
\implies M \leq 4(R+\alpha)^2|\vec{w}^*|^2
&\implies M\leq 4(R+\alpha'+\gamma/2)^2|\vec{w}^*|^2.
\end{align*}
And the proof is complete.
\end{proof}

\end{document}